\relax
%File: formatting-instructions-latex-2021.tex
%release 2021.2
\documentclass[letterpaper]{article} % DO NOT CHANGE THIS
\usepackage{aaai21}  % DO NOT CHANGE THIS
\usepackage{times}  % DO NOT CHANGE THIS
\usepackage{helvet} % DO NOT CHANGE THIS
\usepackage{courier}  % DO NOT CHANGE THIS
\usepackage[hyphens]{url}  % DO NOT CHANGE THIS
\usepackage{graphicx} % DO NOT CHANGE THIS
\urlstyle{rm} % DO NOT CHANGE THIS
  % DO NOT CHANGE THIS
\usepackage{natbib}  % DO NOT CHANGE THIS AND DO NOT ADD ANY OPTIONS TO IT
\usepackage{caption} % DO NOT CHANGE THIS AND DO NOT ADD ANY OPTIONS TO IT
\frenchspacing  % DO NOT CHANGE THIS
\setlength{\pdfpagewidth}{8.5in}  % DO NOT CHANGE THIS
\setlength{\pdfpageheight}{11in}  % DO NOT CHANGE THIS

\usepackage{url}            % simple URL typesetting
\usepackage{booktabs}       % professional-quality tables
\usepackage{amsfonts}       % blackboard math symbols
\usepackage{nicefrac}       % compact symbols for 1/2, etc.
\usepackage{microtype}      % microtypography
\usepackage{subcaption}
\usepackage{multirow}
\usepackage{rotating}

\usepackage[title]{appendix}

\usepackage{graphicx}
\usepackage{lineno}
\usepackage{amsthm}
\usepackage{amsmath}
\usepackage{bm}
\usepackage{amssymb}

\newtheorem*{rep@theorem}{\rep@title}
\newcommand{\newreptheorem}[2]{%
\newenvironment{rep#1}[1]{%
 \def\rep@title{#2 \ref{##1}}%
 \begin{rep@theorem}}%
 {\end{rep@theorem}}}
\makeatother

\newtheorem{definition}{Definition}

\newtheorem{theorem}{Theorem}
\newreptheorem{theorem}{Theorem}

\newtheorem{example}{Example}

\newtheorem{lemma}{Lemma}
\newreptheorem{lemma}{Lemma}

\newcommand{\norm}[1]{\left\lVert#1\right\rVert}

\DeclareMathOperator{\EX}{\mathbb{E}}% expected value

\newcommand{\bfe}{\mathbf{e}}

\newcommand{\bfu}{\mathbf{u}}
\newcommand{\bfs}{\mathbf{s}}

\newcommand{\bfo}{\mathbf{o}}
\newcommand{\bfr}{\mathbf{r}}

\newcommand{\bfv}{\mathbf{v}}

\newcommand{\bmw}{\bm{w}}

\newcommand{\bfnull}{\mathbf{0}}

\newcommand{\bfgamma}{\bm{\gamma}}

\DeclareMathOperator*{\argmax}{arg\,max}
\DeclareMathOperator*{\argmin}{arg\,min}

\DeclareMathOperator*{\avg}{avg}
\DeclareMathOperator*{\adj}{adj}

\usepackage[framemethod=tikz]{mdframed}

\mdfsetup{backgroundcolor=green!10}

\graphicspath{{figures/}}
%\nocopyright
%PDF Info Is REQUIRED.
% For /Author, add all authors within the parentheses, separated by commas. No accents or commands.
% For /Title, add Title in Mixed Case. No accents or commands. Retain the parentheses.
\pdfinfo{
	/Title (Explaining Neural Matrix Factorization with Gradient Rollback)
	/Author (Carolin Lawrence, Timo Sztyler, Mathias Niepert)
	/TemplateVersion (2021.2)
} %Leave this

\setcounter{secnumdepth}{0} %May be changed to 1 or 2 if section numbers are desired.

% The file aaai21.sty is the style file for AAAI Press
% proceedings, working notes, and technical reports.
%

% Title

% Your title must be in mixed case, not sentence case.
% That means all verbs (including short verbs like be, is, using,and go),
% nouns, adverbs, adjectives should be capitalized, including both words in hyphenated terms, while
% articles, conjunctions, and prepositions are lower case unless they
% directly follow a colon or long dash
\title{Explaining Neural Matrix Factorization with Gradient Rollback}

\author{
	%Authors
	% All authors must be in the same font size and format.
	Carolin Lawrence, Timo Sztyler, Mathias Niepert\textsuperscript{\rm 1}\\
}
\affiliations{
	\textsuperscript{\rm 1}NEC Laboratories Europe\\	
	Kurfürsten-Anlage 36\\
	D-69115 Heidelberg\\
	% email address must be in roman text type, not monospace or sans serif
	$\{$carolin.lawrence, timo.sztyler, mathias.niepert$\}$@neclab.eu
	
}
\begin{document}

\maketitle

%!TEX root = gr-aaai.tex
\begin{abstract}
Explaining the predictions of neural black-box models is an important problem, especially when such models are used in applications where user trust is crucial. Estimating the influence of training examples on a learned neural model's behavior allows us to identify training examples most responsible for a given prediction and, therefore, to faithfully explain the output of a black-box model. The most generally applicable existing method is based on influence functions, which scale poorly for larger sample sizes and models.  

We propose \emph{gradient rollback}, a general approach for influence estimation, applicable to neural models where each parameter update step during gradient descent touches a smaller number of parameters, even if the overall number of parameters is large. Neural matrix factorization models trained with gradient descent are part of this model class. These models are popular and have found a wide range of applications in industry. Especially knowledge graph embedding methods, which belong to this class, are used extensively. We show that gradient rollback is highly efficient at both training and test time. Moreover, we show theoretically that the difference between gradient rollback's influence approximation and the true influence on a model's behavior is smaller than known bounds on the stability of stochastic gradient descent. This establishes that gradient rollback is robustly estimating example influence. We also conduct experiments which show that gradient rollback provides faithful explanations for knowledge base completion and recommender datasets. An implementation is available.\footnote{\url{https://github.com/carolinlawrence/gradient-rollback}}
\end{abstract}
%!TEX root = gr-aaai.tex

\section{Introduction}
Estimating the influence a training sample (or a set of training samples) has on the behavior of a machine learning model is a problem with several useful applications. First, it can be used to interpret the behavior of the model by providing an explanation for its output in form of a set of training samples that impacted the output the most. In addition to providing a better understanding
of model behavior, influence estimation has also been used to find adversarial examples, to uncover
domain mismatch, and to determine incorrect or mislabeled examples~\cite{koh2017-influence-functions}. Finally, it can also be used to estimate the uncertainty for a particular output by exploring the stability of the output probability before and after removing a small number of influential training samples.

We propose \emph{gradient rollback} (GR), a novel approach for influence estimation. 
GR is applicable to neural models trained with gradient descent and is highly efficient especially when the number of parameters that are significantly changed during any update step is moderately sized. This is the case for neural matrix factorization methods. Here, we focus on neural link prediction models for multi-relational graphs (also known as knowledge base embedding models), as these models subsume several other matrix factorization models~\cite{guo2020survey}. They have found a wide range of industry applications such as in recommender and question answering systems. They are, however, black-box models whose predictions are not inherently interpretable. Other methods, such as rule-based methods, might be more interpretable, but
typically have worse performance. Hence, neural matrix factorization methods would greatly benefit from being more interpretable~\cite{bianchi2020knowledge}. For an illustration of matrix factorization and GR see Figure \ref{fig:overview}.

We explore two crucial questions regarding the utility of GR. First, we show that GR is highly efficient at both training and test time, even for large datasets and models. Second, we show that its influence approximation error  is smaller than known bounds on the stability of stochastic gradient descent for non-convex problems~\cite{stability-hardt}. The stability of an ML model is defined as the maximum change of its output one can expect on any sample when retrained on a slightly different set of training samples. The relationships between uniform stability and generalization of a learning system is a seminal result~\cite{bousquet2002stability}. Here, we establish a close connection between the stability of a learning system and the challenge of estimating training sample influence and, therefore, explaining the model's behavior. Intuitively, the more stable a model, the more likely is it that we can estimate the influence of training samples well. 
We show theoretically that the difference between GR's influence approximation and the true influence on the model behavior is (strictly) smaller than known bounds on the stability of stochastic gradient descent.

We perform experiments on standard matrix factorization datasets including those for knowledge base completion and recommender systems. Concretely, GR can explain a prediction of a learnt model by producing a ranked list of training examples, where each instance of the list contributed to changing the likelihood of the prediction and the list is sorted from highest to lowest impact. To produce this list, we (1) estimate the influence of training examples during training and (2) use this estimation to determine the contribution each training example made to a particular prediction. To evaluate whether GR selected training examples relevant for a particular prediction, we remove the set of training examples, retrain the model from scratch and check if the likelihood for the particular prediction decreased. Compared to baselines, GR can identify subsets of training instances that are highly influential to the model's behavior. These can be represented as a graph to explain a prediction to a user.

%!TEX root = gr-aaai.tex

\begin{figure*}[t!]
	\centering
	\includegraphics[width=0.98\textwidth]{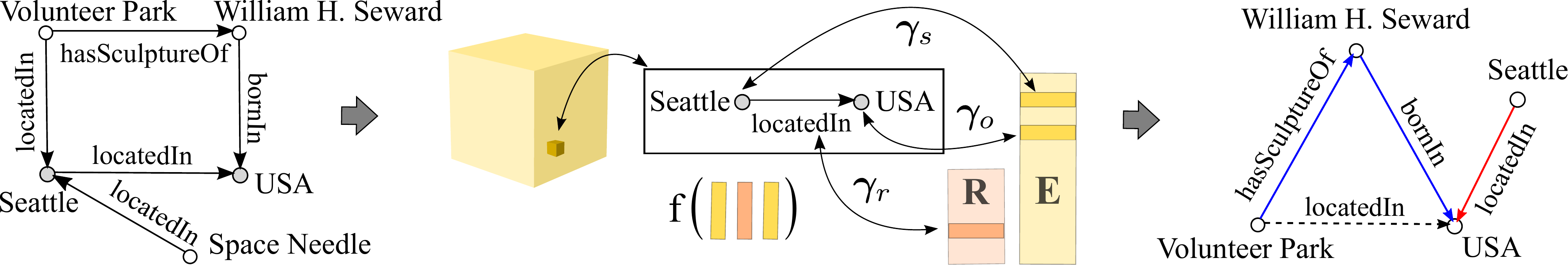}	
	\caption{\label{fig:overview} An illustration of the problem and the proposed method. A knowledge base of facts is represented as a sparse 3-dimensional matrix. Neural matrix factorization methods perform an implicit matrix decomposition by minimizing a loss function $f$ operating on the entity and relation representations. Gradient rollback tracks the parameter changes $\bfgamma$ caused by each sample during training. At test time, the aggregated updates are used to estimate triple influence without the need to retrain the model. Influence estimates are used to provide triples (blue and red) explaining the model's behavior for a test triple (dashed).}
\end{figure*}

\section{Neural Matrix Factorization}

We focus on neural matrix factorization models for link prediction in multi-relational graphs (also known as knowledge graph embedding models) for two reasons. First, these models are popular with a growing body of literature. There has been increasing interest in methods for explaining and debugging knowledge graph embedding methods~\cite{bianchi2020knowledge}. Second, matrix factorization methods for recommender systems can be seen as instances of neural matrix factorization where one uses pairs of entities instead of (entity, relation type, entity)-triples~\cite{guo2020survey}.

We consider the following general setting of representation learning for multi-relational graphs\footnote{Our   results with respect to $3$-dimensional matrices apply to $k$-dimensional matrices with $k \geq 2$.}.
There is a set of entities $\mathcal{E}$ and a set of relation types $\mathcal{R}$. 
A knowledge base $\mathcal{K}$ consists of a set of triples $d = (s, r, o)$ where $s \in \mathcal{E}$ is the subject (or head entity), $r \in \mathcal{R}$ the relation type, and $o \in \mathcal{E}$ the object (or tail entitiy) of the triple $d$.  
For each entity and relation type we \textit{learn} a vector representation with hidden dimension $h$.\footnote{Our results generalize to methods where relation types are represented with more than one vector such as in \textsc{RESCAL}~\cite{nickel2011three} or \textsc{ComplEx}~\cite{trouillon2016complex}.} For each entity $e$ and relation type $r$ we write $\mathbf{e}$ and $\mathbf{r}$, respectively,  for their vector representations. 
Hence, the set of parameters of a knowledge base embedding model consists of one matrix $\mathbf{E} \in \mathbb{R}^{|\mathcal{E}| \times h}$ and one matrix $\mathbf{R} \in \mathbb{R}^{|\mathcal{R}| \times h}.$ The rows of matrix $\mathbf{E}$ are the entity vector representations (embeddings) and the rows of $\mathbf{R}$ are the relation type vector representations (embeddings). To refer to the set of all parameters we often use the term $\bmw \in \Omega$ to improve readability, with $\Omega$ denoting the parameter space. Hence, $\bfe = w[e]$, that is, the embedding of entity $e$ is the part of $\bmw$ pertaining to entity $e$. Analogously for the relation type embeddings. Moreover, for every triple $d = (s, r, o)$, we write $\bmw[d] = (\bmw[s], \bmw[r], \bmw[o]) = (\bfs,\bfr,\bfo)$ to denote the triple of parameter vectors associated with $d$ for $\bmw \in \Omega$. We extend element-wise addition and multiplication to triples of vectors in the obvious way.

We can now define a \emph{scoring function} $\phi(\bmw; d)$ which, given the current set of parameters $\bmw$, maps each triple $d$ to a real-valued score. Note that, for a given triple $d$, the set of parameters involved in computing the value of $\phi(\bmw; d)$ are a small subset of $\bmw$. Usually, $\phi$ is a function of the vector representations of entities and relation types of the triple $d = (s, r, o)$, and we write $\phi(\bfs, \bfr, \bfo)$ to make this explicit. 
\begin{example}
	Given a triple $d = (s, r, o)$. The scoring function of \textsc{DistMult} is defined as 
	$\phi(\bmw; d) = \langle \bfs, \bfr, \bfo \rangle$, that is, the inner product of the three embeddings.
\end{example}
To train a neural matrix factorization model means running an iterative learning algorithm to find a set of paramater values $\bmw$ that minimize a loss function $\mathcal{L}(\bmw; \mathcal{D})$, which combines the scores of a set of training triples $\mathcal{D}$ into a loss-value.
We consider general iterative update rules such as stochastic gradient descent (SGD) of the form $G: \Omega \rightarrow \Omega$ which map a point
$\bmw$ in parameter space $\Omega$ to another point $G(\bmw) \in \Omega$. A typical loss function is $\mathcal{L}(\bmw; \mathcal{D}) = \sum_{d \in \mathcal{D}} \ell(\bmw; d)$ with
\begin{align}
\ell(\bmw; d = (s, r, o)) = -  \log \Pr(\bmw; o \mid s, r) \mbox { \ \ \textbf{and} \ \ } \label{loss-log} \\
\Pr(\bmw; o \mid s, r) = \frac{ \exp(\phi(\bmw; (s, r, o)))}{ \sum_{o'} \exp\left(\phi(\bmw; (s, r, o'))\right)}. \label{prob-norm} 
\end{align}
The set of $o'$ in the denominator is mostly a set of randomly  sampled entities (negative sampling) but recent results have shown that computing the softmax over all entities can be advantageous. Finally, once a model is trained, Equation \ref{prob-norm} can be used to determine for a test query $(s, r, ?)$ how likely each entity $o$ is by summing over all other entities $o'$.
%!TEX root = gr-aaai.tex
\section{Gradient Rollback for Matrix Factorization}

A natural way of explaining the model's output for a test triple $d$ is to find the training triple $d'$ such that retraining the model without $d'$ changes (decreases \textbf{or} increases) $f(\bmw'; d)$ the most. The function $f$ can be the loss function. Most often, however, it is the scoring function as a proxy of the loss. 
\begin{equation}
\label{eq:refer_in_exp}
\begin{split}
d_{\mathtt{expl}} = \argmax_{d'\in \mathcal{D}} \left[ f(\bmw; d) - f(\bmw'; d)\right] \ \  \mbox{ \textbf{or} } & \\
d_{\mathtt{expl}}= \argmin_{d'\in \mathcal{D}} \left[ f(\bmw; d) - f(\bmw'; d) \right], &
\end{split}
\end{equation}

where $\bmw'$ are the parameters after retraining the model with $\mathcal{D} - \{d'\}$. This can directly be extended to obtaining the top-$k$ most influential triples. While this is an intuitive approach, solving the optimization problems above is too expensive in practice as it requires retraining the model $|\mathcal{D}|=n$ times for each triple one seeks to explain.\footnote{For example, \textsc{FB15k-237} contains 270$k$ training triples and training one model with TF2 on a RTX 2080 Ti GPU takes about 15 minutes. To explain one triple by retraining $|\mathcal{D}|=270k$ times would take over 2 months.} 

Instead of retraining the model, we propose gradient rollback (GR), an approach for tracking the influence each training triple has on a model's parameters during training. Before training and for each triple $d'$ and the parameters it can influence $\bmw(d')$, we initialize its influence values with zero: $\forall d', \bfgamma_{[d', \bmw(d')]} = \bm{0}$. We now record the changes in parameter values during training that each triple causes. With stochastic gradient descent (SGD), for instance, we iterate through the training triples $d' \in \mathcal{D}$ and record the changes it makes at time $t$, by setting its influence value to
\[\bfgamma_{[d', \bmw_{t+1}(d')]} \leftarrow \bfgamma_{[d', \bmw_{t}(d')]} - \alpha \nabla f(\bmw_t; d').\footnote{For any iterative optimizer such as SGD and Adam, the parameter updates are readily available and can be obtained efficiently during training.}\]
After training, $\gamma$ can be utilised as a look up table: for a triple $d$ that we want to explain, we look up the influence each triple $d'$ had on the weights $\bmw(d)$ of $d$ via $\bfgamma_{[d', \bmw(d)]} = (\bfgamma_s, \bfgamma_r, \bfgamma_o)$. Concretely, a triple $d$ has an influence on $d'$ at any point where the two triples have a common element wrt. their subject, relation or object. Consequently, explanations can be any triple $d'$ that has either an entity or the relation in common with $d$.

Let $\bmw'$ be the parameters of the model when trained on $\mathcal{D} - \{d'\}$. At prediction time, we now  approximate, for every test triple $d$, the difference $f(\bmw; d) - f(\bmw'; d)$  with $f(\bmw; d) - f(\bmw - \bfgamma_{[d', \bmw(d)]}; d)$, that is, by \emph{rolling back} the influence triple $d'$ had on the parameters of triple $d$ during training. We then simply choose the triple $d'$ that maximizes this difference. 

There are two crucial questions determining the utility of GR, which we address in the following:
\begin{enumerate}
	\item The resource overhead of GR during training and at test time; and
	\item How closely $f(\bmw - \bfgamma_{[d', \bmw(d)]}; d)$ approximates $f(\bmw'; d)$.
\end{enumerate}

\subsection{Resource Overhead of Gradient Rollback}

To address the first question, let us consider the computational and memory overhead for maintaining $\bfgamma_{[d, \bmw_t(d)]}$ during training for each $d$. The loss functions's normalization term is (in expectation) constant for all triples and can be ignored. We verified this empirically and it is indeed a reasonable assumption. Modern deep learning frameworks compute the gradients  and make them accessible in each update step. Updating $\bfgamma_{[d, \bmw_t(d)]}$ with the given gradients takes $O(h)$, that is, it is possible in constant time. Hence, computationally, the overhead is minimal. Now, to store $\bfgamma_{[d, \bmw_t(d)]}$ for every $d \in \mathcal{D}$ we need $h |\mathcal{D}| + 2 h |\mathcal{D}|$ floats. Hence, the memory overhead of gradient rollback is $3h |\mathcal{D}|$. This is about the same size as the parameters of the link prediction model itself.  In a nutshell:
\begin{mdframed}[backgroundcolor=white]
	\emph{Gradient rollback has a $O(h|\mathcal{D}|)$ computational and $3 h |\mathcal{D}|$ memory overhead during training.}
\end{mdframed}
At test time, we want to understand the computational complexity of computing 
\begin{equation}
\begin{split}
d_{\mathtt{expl}} = \argmax_{d'\in \mathcal{D}} f(\bmw; d)-f(\bmw - \bfgamma_{[d', \bmw(d)]}; d) \ \ \mbox { \textbf{or} } & \\
d_{\mathtt{expl}} = \argmin_{d'\in \mathcal{D}} f(\bmw; d)-f(\bmw - \bfgamma_{[d', \bmw(d)]}; d)  & 
\end{split}
\end{equation}
We have $\bfgamma_{[d', \bmw(d)]} \neq \bm{0}$ only if $d$ and $d'$ have at least one entity or relation type in common. Hence, to explain a triple $d$ we only have to consider triples $d'$ adjacent to $d$ in the knowledge graph, that is, triples where either of the arguments overlap. Let $\adj_{\max}$ and $\adj_{\avg}$ be the maximum and average number of adjacent triples in the knowledge graph. Then, we have the following:
\begin{mdframed}[backgroundcolor=white]
	\emph{Gradient rollback requires at most $\adj_{\max} + 1$ and on average $\adj_{\avg} + 1$  computations of the function $f$ to explain a test triple $d$.}
\end{mdframed}

\subsection{Approximation Error of Gradient Rollback}

To address the second question above, we need, for every pair of triples $d, d'$, to bound the expression
\[ \EX|f(\bmw - \bfgamma_{(d', \bmw(d))}; d) - f(\bmw'; d)|, \]
where $\bmw$ are the parameter values resulting from training $f$ on all triples $\mathcal{D}$ and $\bmw'$ are the parameter values resulting from training $f$ on $\mathcal{D} - \{d'\}$. If the above expression can be bounded and said bound is lower than what one would expect due to randomness of the iterative learning dynamics, the proposed gradient rollback approach would be highly useful. We use the standard notion of stability of learning algorithms~\cite{stability-hardt}. In a nutshell, our theoretical results will establish that:
\begin{mdframed}[backgroundcolor=white]
	\emph{Gradient rollback can approximate, for any $d' \in \mathcal{D}$, the changes of a scoring/loss function one would observe if a model were to be retrained on $\mathcal{D} - \{d'\}$. The approximation error is in expectation lower than known bounds on the stability of stochastic gradient descent.}
\end{mdframed}

\begin{definition}
	A function $f$ is L-Lipschitz if for all $u, v$ in the domain of $f$ we have
	$\norm{\nabla f(u)} \leq L$. This implies that $|f(u) - f(v)| \leq L\norm{u - v}$.
\end{definition}

We analyze the output of stochastic gradient descent on two data sets, $\mathcal{D}$ and $\mathcal{D} - \{d'\}$, that differ in precisely one triple.
If $f$ is $L$-Lipschitz for every example $d$, we have
\[ \EX |f(\bmw; d) - f(\bmw'; d)| \leq L \EX \norm{\bmw - \bmw'}\]
for all $\bmw$ and $\bmw'$. A vector norm in this paper is always the $2$-norm. 
Hence, we have $\EX|f(\bmw - \bfgamma_{(d', \bmw(d))}; d) - f(\bmw'; d)| \leq L\EX \norm{\bmw -\bfgamma_{(d', \bmw(d))} - \bmw'}$ and we can assess the approximation error by tracking the extent to which the parameter values $\bmw$ and $\bmw'$ from two coupled iterative learning dynamics diverge over time. 
Before we proceed, however, let us formally define some concepts and show that they apply to typical scoring functions of knowledge base embedding methods.

\begin{definition}
	A function $f: \Omega \rightarrow \mathbb{R}$ is $\beta$-smooth if for all $u, v$ in the domain of $f$ we have $\norm{\nabla f(u) - \nabla f(v)} \leq \beta \norm{u - v}.$
\end{definition}

To prove Lipschitz and $\beta$-smoothness properties of a function $f(\bmw; d)$ for all $\bmw \in \Omega$ and $d \in \mathcal{D}$, we henceforth assume that the norm of the entity and relation type embeddings is bounded by a constant $C > 0$. That is, we assume $\max_{r \in \mathcal{R}} \norm{\bmw[r]} \leq C$ and $\max_{e \in \mathcal{E}} \norm{\bmw[e]} \leq C$ for all $\bmw \in \Omega$. This is a reasonable assumption for two reasons. First, several regularization techniques constrain the norm of embedding vectors. For instance, the unit norm constraint, which was used in the original \textsc{DistMult} paper~\cite{distmult}, enforces that $C=1$. Second, even in the absence of such constraints we can assume a bound on the embedding vectors' norms as we are interested in the approximation error for and the stability of a given model that was obtained from running SGD a \emph{finite} number of steps using the \emph{same} parameter initializations. When running SGD, for $\mathcal{D}$ and all $\mathcal{D}-\{d'\}$, a finite number of steps with the same initialization, the  encountered parameters $\bmw$ and $\bmw'$ in each step of each run of SGD form a finite set. Hence, for our purposes, we can assume that  $f$'s domain $\Omega$ is compact. 
Given this assumption, we show that the inner product, which is used in several scoring functions, is $L$-Lipschitz and $\beta$-smooth on $\Omega$. The proofs of all lemmas and theorems can be found in the appendix.

\begin{replemma}{lemma-lipschitz-distmult}
	Let $\phi$ be the scoring function of \textsc{DistMult} defined as 
	$\phi(\bmw; d = (s, r, o)) = \langle \bfs, \bfr, \bfo \rangle$ with $\bmw(d) = (\bfs, \bfr, \bfo)$, and let $C$ be the bound on the norm of the embedding vectors for all $\bmw \in \Omega$.
	For a given triple $d = (s, r, o)$ and all $\bmw, \bmw' \in \Omega$, we have that
	\[|\phi(\bmw; d) - \phi(\bmw'; d)| \leq 2 C^2 \norm{\bmw  - \bmw'}.\]
\end{replemma}

\begin{replemma}{lemma-smooth-distmult}
	Let $\phi$ be the scoring function of \textsc{DistMult} defined as 
	$\phi(\bmw; d = (s, r, o)) = \langle \bfs, \bfr, \bfo \rangle$ with $\bmw(d) = (\bfs, \bfr, \bfo)$, and let $C$ be the bound on the norm of the embedding vectors for all $\bmw \in \Omega$.
	For a given triple $d = (s, r, o)$ and all $\bmw, \bmw' \in \Omega$, we have that
	\[ \norm{\nabla \phi(\bmw; d) - \nabla \phi(\bmw', d)} \leq 4C \norm{\bmw - \bmw'}.\]
\end{replemma}

Considering typical KG embedding loss functions and the softmax and sigmoid function being $1$-Lipschitz, this implies that the following theoretical analysis of \emph{gradient rollback} applies to a large class of neural matrix factorization models. 
Let us first define an additional property iterative learning methods can exhibit. 
\begin{definition}
	An update rule $G: \Omega \rightarrow \Omega$ is $\eta$-expansive if 
	\[ \sup_{\bfu, \bfv \in \Omega} \frac{\norm{G(\bfu) - G(\bfv)}}{\norm{\bfu-\bfv}} \leq \eta .\]
\end{definition}

Consider the gradient
updates $G_1, ..., G_T$ and $G'_1, ..., G'_T$ induced by running stochastic gradient descent on $\mathcal{D}$ and $\mathcal{D}-\{d'\}$, respectively. Every gradient update changes the parameters $\bmw$ and $\bmw'$ of the two coupled models.  Due to the difference in size, there is one gradient update $G_i$ whose corresponding update $G'_i$ does not change the parameters $\bmw'$. Again, note that there is always a finite set of parameters $\bmw$ and $\bmw'$ encountered during training. 

We can derive a stability bound for stochastic gradient descent run on the two sets, $\mathcal{D}$ and $\mathcal{D}-\{d'\}$. The following theorem and its proof are an adaptation of Theorem 3.21 in \citet{stability-hardt} and the corresponding proof. There are two changes compared to the original theorem. First, $f$ is not assumed to be a loss function but \emph{any} function that is $L$-Lipschitz and $\beta$-smooth. For instance, it could be that the loss function of the model is defined as $g(f(\cdot))$ for some Lipschitz and smooth function $g$. The proof of the original theorem does not rely on $f$ being a loss function and it only requires that $f(\cdot; d) \in [0, 1]$ is an $L$-Lipschitz and $\beta$-smooth function for all inputs. Second, the original proof assumed two training datasets of the same size that differ in one of the samples. We assume two sets, $\mathcal{D}$ and $\mathcal{D}-\{d'\}$, where $\mathcal{D}-\{d'\}$ has exactly one sample less than $\mathcal{D}$. The proof of the theorem is adapted to this setting. 

\begin{reptheorem}{theorem-stab-bound}
	Let $f(\cdot; d) \in [0, 1]$ be an $L$-Lipschitz and $\beta$-smooth function for
	every possible triple $d$ and let $c$ be the initial learning rate. Suppose we run SGD for $T$ steps with monotonically non-increasing step sizes $\alpha_t \leq c/t$ on two different sets of triples $\mathcal{D}$ and $\mathcal{D}-\{d'\}$. 
	Then, for any $d$, 
	\[ \EX |f(\bmw_T; d) - f(\bmw'_T; d)| \leq \frac{1+1/\beta c}{n-1}(cL^2)^{\frac{1}{\beta c + 1}}T^{\frac{\beta c}{\beta c + 1}},\]
	with $\bmw_T$ and $\bmw_T'$ the parameters of the two models after running SGD. We name the right term in the above inequality $\Lambda_{\mbox{stab-nc}}$.
\end{reptheorem}
The assumption $f(\cdot; d) \in [0, 1]$ of the theorem is fulfilled if we use the loss from Equation~\ref{loss-log}, as long as we are interested in the stability of the probability distribution of Equation~\ref{prob-norm}. That is because the cross-entropy loss applied to a softmax distribution is $1$-Lipschitz and $1$-smooth where the derivative is taken with respect to the logits. Hence, the $L$-Lipschitz and $\beta$-smoothness assumption holds also for the probability distribution of Equation~\ref{prob-norm}. In practice, estimating the influence on the probability distribution from Equation~\ref{prob-norm} is what one is interested in and what we evaluate in our experiments. 

The following lemma generalizes the known $(1 + \alpha \beta)$-expansiveness property of the gradient update rule~\cite{stability-hardt}. It establishes that the increase of the distance between $\bmw - \bfgamma$ and $\bmw'$ after one step of gradient descent is at most as much as the increase in distance between two parameter vectors corresponding to $\mathcal{D}$ and $\mathcal{D}-\{d'\}$ after one step of gradient descent. 

\begin{replemma}{lemma-expanse}
	Let $f: \Omega \rightarrow \mathbb{R}$ be a function and let $G(\bmw) = w - \alpha \nabla f(\bmw)$ be the gradient update rule with step size $\alpha$. Moreover, assume that $f$ is $\beta$-smooth. Then, for every $\bmw, \bmw', \bfgamma \in \Omega$ we have
	\[ \norm{G(\bmw) - \bfgamma - G(\bmw')} \leq \norm{\bmw - \bfgamma - \bmw'} + \alpha \beta \norm{\bmw - \bmw'}. \]
\end{replemma}

\begin{replemma}{lemma-norm-bound}
Let $f(\cdot; d)$ be $L$-Lipschitz and $\beta$-smooth function. Suppose we run SGD for $T$ steps on two sets of triples $\mathcal{D}$ and $\mathcal{D}-\{d'\}$ for any $d' \in\mathcal{D}$ and with learning rate $\alpha_t$ at time step $t$. Moreover, let $\Delta_t = \EX\left[\norm{\bmw_t - \bmw_t'} \mid \norm{\bmw_{t_0} - \bmw_{t_0}'}=0\right]$ and $\hat{\Delta}_t = \EX\left[\norm{\bmw_t - \bfgamma_{[d', \bmw_t(d)]} - \bmw_t'} \mid \norm{\bmw_{t_0} - \bmw_{t_0}'}=0\right]$ for some $t_0 \in \{1, ..., n\}$ . Then, for all $t \geq t_0$, 
\[ \hat{\Delta}_{t+1} < \left(1 - \frac{1}{n}\right)(1 + \alpha_t \beta)\Delta_t + \frac{1}{n}\left(\Delta_t + \alpha_t L\right). \]
\end{replemma}
The lemma has broader implications since it can be extended to other update rules $G$ as long as they fulfill an $\eta$-expansive property and their individual updates are bounded. For each of these update rules the above lemma holds. 

The following theorem establishes that the approximation error of gradient rollback is smaller than a known stability bound of SGD. It uses Lemma~\ref{lemma-norm-bound} and the proof of Theorem~\ref{theorem-stab-bound}.

\begin{reptheorem}{theorem-bound-gr}
	Let $f(\cdot; d) \in [0, 1]$ be an $L$-Lipschitz and $\beta$-smooth function. Suppose we run SGD for $T$ steps with monotonically non-increasing step sizes $\alpha_t \leq c/t$ on two sets of triples $\mathcal{D}$ and $\mathcal{D}-\{d'\}$. Let  $\bmw_T$ and $\bmw_T'$, respectively, be the resulting parameters.
	Then, for any triple $d$ that has at least one element in common with $d'$ we have, 
	\[ \EX|f(\bmw_T - \bfgamma_{[d', \bmw_T(d)]}; d) - f(\bmw'_T; d)| < \Lambda_{\mbox{stab-nc}}.\]
\end{reptheorem}

The previous results establish a connection between estimating the influence of training triples on the model's behavior using GR and the stability of SGD when used to train the model. An interesting implication is that regularization approaches that improve the stability (by reducing the Lipschitz constant and/or the expansiveness properties of the learning dynamics, cf.~\cite{stability-hardt}) also reduce the error bound of GR. We can indeed verify this empirically.
%!TEX root = gr-aaai.tex
\section{Related Work}

The first neural link prediction method for multi-relational graphs performing an implicit matrix factorization is RESCAL~\cite{nickel2011three}. Numerous scoring functions have since been proposed. Popular examples are \textsc{TransE}~\cite{bordes2013translating}, \textsc{DistMult}~\cite{distmult}, and \textsc{ComplEx}~\cite{trouillon2016complex}. Knowledge graph embedding methods have been mainly evaluated through their 
accuracy on link prediction tasks. A number of papers has recently shown that with appropriate hyperparameter tuning, \textsc{ComplEx}, \textsc{DistMult}, and \textsc{RESCAL} are highly competitive scoring functions, often achieving state-of-the-art results~\cite{kadlec2017knowledge,RuffinelliBG20,jain2020knowledge}.  There are a number of proposals for combining rule-based and matrix factorization methods~\cite{rocktaschel2015injecting,guo2016jointly,minervini2017adversarial}, which can make link predictions  more interpretable. In contrast, we aim to generate faithful explanations for non-symbolic knowledge graph embedding methods.  

There has been an increasing interest in understanding model behavior through adversarial attacks~\cite{biggio2013security, papernot2016limitations,dong2017towards,ebrahimi-etal-2018-hotflip}. Most of these approaches are aimed at visual data. There are, however, several approaches that consider adversarial attacks on graphs~\cite{dai2018adversarial,zugner2018adversarial}. While analyzing attacks can improve model interpretability, the authors focused on neural networks for single-relational graphs and the task of node classification. 
For a comprehensive discussion of adversarial attacks on graphs we refer the reader to a recent survey \cite{chen2020survey}. There is prior work on adversarial samples for KGs but with the aim to improve accuracy and not model interpretability~\cite{minervini2017adversarial,kbgan2018}. 

There are two recent papers that directly address the problem of explaining graph-based ML methods. First, \textsc{GNNExplainer}~\cite{ying2019gnnexplainer} is a method for explaining the predictions of graph neural networks and, specifically, graph convolutional networks for node classification. Second, the work most related to ours proposes \textsc{Criage} which aims at estimating the influence of triples in KG embedding methods~\cite{criage:naacl19}. Given a triple, their method only considers a neighborhood to be the set of triples with the same object.  Moreover, they derive a first-order approximation of the influence in line with work on influence functions \cite{koh2017-influence-functions}. In contrast, GR tracks the changes made to the parameters during training and uses the aggregated contributions to estimate influence. In addition, we establish a theoretical connection to the stability of learning systems. 
Influence functions, a concept from robust statistics, were applied to black-box
models for assessing the changes in the loss caused by changes in the training data \cite{koh2017-influence-functions}. The paper also proposed several strategies to make influence functions more efficient. It was shown in prior work \cite{criage:naacl19}, however, that influence functions are not usable for typical knowledge base embedding methods as they scale very poorly. Consequently our paper is the first to offer an efficient and theoretically founded method of tracking influence in matrix factorization models for explaining prediction via providing the most influential training instances.

%!TEX root = gr-aaai.tex
\section{Experiments}

\begin{table*}[t!]
	\centering
	\begin{center}
		\begin{tabular}{|lrrr|rrr|rrr||lrrr|rrr|rrr|}
			\hline
			&\multicolumn{9}{c}{PD\%}&&\multicolumn{9}{c|}{TC\%}\\
			&\multicolumn{3}{c}{\textsc{Nations}}&\multicolumn{3}{c}{\textsc{FB15k-237}}&\multicolumn{3}{c}{\textsc{Movielens}}&&\multicolumn{3}{c}{\textsc{Nations}}&\multicolumn{3}{c}{\textsc{FB15k-237}}&\multicolumn{3}{c|}{\textsc{Movielens}}\\
			&1&10&\textsc{all}&1&10&\textsc{all}&1&10&\textsc{all}&&1&10&\textsc{all}&1&10&\textsc{all}&1&10&\textsc{all}\\
			\hline
			NH&54&66&82&59&67&83&53&52&92&NH&18&36&70&35&45&59&$\hphantom{0}$3&14&72\\
			GR&93&97&100&77&82&96&68&82&100&GR&38&83&97&38&58&85&20&38&100\\
			$\Delta\uparrow$&39&31&18&18&15&13&15&30&8&$\Delta\uparrow$&20&47&27&3&13&26&17&24&28\\
			\hline
			\hline
			NH&59&68&80&72&83&91&53&61&91&NH&13&47&76&69&70&79&5&13&77\\
			GR&90&95&100&80&88&99&73&71&100&GR&38&85&99&65&81&91&29&48&100\\
			$\Delta\uparrow$&31&27&20&8&5&8&20&10&9&$\Delta\uparrow$&25&38&23&-4&11&12&24&35&23\\
			\hline
		\end{tabular}
		\caption{Results (\textsc{DistMult} at the top, \textsc{ComplEx} at the bottom) of removing a set of training triples (of size $1$, $10$, or \textsc{all}), randomly chosen from triples adjacent to the test triples (NH) or by using gradient rollback (GR). For \textsc{all} we delete on average ($\pm$ standard deviation), \textsc{Nations}: 261$\pm56$, \textsc{FB15k-237}: 2.9k$\pm2.3k$ and \textsc{Movielens}: 16.7k$\pm5k$ (\textsc{DistMult}). The average number of adjacent triples ($\pm$ standard deviation) is, \textsc{Nations}: 508$\pm101$, \textsc{FB15k-237}: 5.5k$\pm4.5k$ and \textsc{Movielens}: 23.7k$\pm7.8k$. GR removes sets that lead to a larger change in probability and top-1 predictions (difference to NH is given in row $\Delta\uparrow$).}
		\label{tab:results}
	\end{center}
\end{table*}

\textbf{Identifying explanations.} We analyze the extent to which GR can approximate the true influence of a triple (or set of triples). For a given test triple $d=(s,r,o)$ and a trained model with parameters $\bmw$, we use GR to identify a set of training triples $\mathcal{S} \subseteq \mathcal{D}$ that has the highest influence on $d$. To this end, we first identify the set of triples $\mathcal{N}$ adjacent to $d$ (that is, triples that contain at least one of $s$, $r$ or $o$) and compute $\Delta(d', d) = \Pr(\bmw; o \mid s, r) - \Pr(\bmw-\bfgamma_{[d', \bmw(d)]}; o \mid s, r)$ for each $d' \in \mathcal{N}$. We then let $\mathcal{S}$ be the set resulting from picking (a) exactly $k$ triples $d' \in \mathcal{N}$ with the $k$ largest values for $\Delta(d', d)$ or (b) \emph{all} triples with a positive $\Delta(d', d)$. We refer to the former as GR-$k$ and the latter as GR-\textsc{all}.

To evaluate if the set of chosen triples $\mathcal{S}$ are faithful explanations \cite{JacoviGoldberg:20} for the prediction, we follow the evaluation paradigm ``RemOve And Retrain (ROAR)'' of \citet{roar}:
We let $\mathcal{D}' = \mathcal{D}-\mathcal{S}$ and retrain the model from scratch with training set $\mathcal{D}'$ leading to a new model with parameters $\bmw'$. After retraining, we can now observe $\Pr(\bmw'; o \mid s, r)$, which is the true probability for $d$ when removing the explanation set $\mathcal{S}$ from the training set, and we can use this to evaluate GR.\footnote{We fix all random seeds and use the same set of negative samples during (re-)training to avoid additional randomization effects.}
Since it is expensive to retrain a model for each test triple, we restrict the analysis to explaining only the triple $(s, r, \hat{o})$ with $\hat{o} = \argmax_o \Pr(\bmw; o \mid s, r)$ for each test query $(s, r, ?)$, that is, the triple with the highest score according to the model.

\textbf{Evaluation metrics.} We use two different metrics to evaluate GR. First, if the set $\mathcal{S}$ contains triples influential to the test triple $d$, then the probability of $d$ under the new model (trained without $\mathcal{S}$) should be smaller, that is, $\Pr(\bmw'; o \mid s, r) < \Pr(\bmw; o \mid s, r)$. We measure the ability of GR to identify triples causing a \textit{P}robability \textit{D}rop and name this measure PD\%. A PD of 100\% would imply that each set $\mathcal{S}$ created with GR always caused a drop in probability for $d$ after retraining with $\mathcal{D}'$. In contrast, when removing random training triples, we would expect a PD\% close to 50\%.
An additional way to evaluate GR is to measure whether the removal of $\mathcal{S}$ causes the newly trained model to predict a different top-$1$ triple, that is, $\argmax_o \Pr(\bmw; o \mid s, r) \neq \argmax_o \Pr(\bmw'; o \mid s, r)$. This measures the ability of GR to select triples causing the \textit{T}op-1 prediction to \textit{C}hange and we name this TC\%. If the removal of $\mathcal{S}$ causes a top-1 change, it suggests that the training samples most influential to the prediction of triple $d$ have been removed. This also explores the ability of GR to identify triples for effective removal attacks on the model. We compare GR to two baselines: NH-$k$ removes exactly $k$ random triples adjacent to $d$ and NH-\textsc{all} randomly removes the same number of triples as GR-\textsc{all} adjacent to $d$. In an additional experiment, we also directly compare GR with \textsc{Criage}~\cite{criage:naacl19}.

\textbf{Datasets \& Training.} We use \textsc{DistMult}~\cite{distmult} and \textsc{ComplEx}~\cite{trouillon2016complex} as scoring functions since they are popular and competitive~\cite{kadlec2017knowledge}. We report results on three datasets: two knowledge base completion (\textsc{Nations}~\cite{nations}, \textsc{FB15k-237} \cite{fb15k-237}) and one recommendation dataset (\textsc{Movielens} \cite{movielens}). \textsc{Movielens} contains triples of 5-star ratings users have given to movies; as in prior work the set of entities is the union of movies and users and the set of relations are the ratings~\cite{pezeshkpour-etal-2018-embedding}. When predicting movies for a user, we simply filter out other users. Statistics and hyperparameter settings are in the appendix. Since retraining is costly, we only explain the top-1 prediction for a set of 100 random test triples for both \textsc{FB15k-237} and \textsc{Movielens}. For \textsc{Nations} we use the entire test set.

We want to emphasize that we \emph{always retrain completely from scratch} and use hyperparameter values typical for state-of-the-art KG completion models, leading to standard results for \textsc{DistMult} (see Table \ref{tab:eval} in the Appendix). Prior work either retrained from pretrained models~\cite{koh2017-influence-functions,criage:naacl19} or used non-standard strategies and hyperparameters to ensure model convergence~\cite{criage:naacl19}. 

\begin{table}[t!]
	\centering
	\begin{center}
		\begin{tabular}{|lrrrr|rrrr|}
			\hline
			&\multicolumn{4}{c}{PD\%}&\multicolumn{4}{c|}{TC\%}\\
			&1&3&5&10&1&3&5&10\\
			\hline
			NH&49&50&51&60&3&13&14&20\\
			\textsc{Criage}&91&93&94&95&16&36&48&68\\
			GR-O&\textbf{92}&94&\textbf{97}&97&25&\underline{48}&\underline{62}&68\\
			GR&\textbf{92}&\textbf{96}&\textbf{97}&\textbf{98}&\textbf{\underline{27}}&\textbf{\underline{51}}&\textbf{\underline{66}}&\textbf{73}\\
			\hline
		\end{tabular}
		\caption{Results on \textsc{Nations}, using \textsc{DistMult} with a \textsc{sigmoid} activation function and for $k=\{1, 3, 5, 10\}$. Bold marks the best results; underlined results mark a statistical significance with regards to \textsc{Criage} at $p\leq0.01$ using an approximate randomization test; all results are statistically signifiant with regards to NH. \textsc{Criage} performs worse than both GR and GR-O, especially with regards to TC. Furthermore, \textsc{Criage} considers only training triples with the same object as an explanation and is significantly slower.}
		\label{tab:results_criage}
	\end{center}
\end{table}

\begin{figure*}[t!]
\centering
\begin{subfigure}{.32\textwidth}
	\centering
	\includegraphics[width=0.8\linewidth]{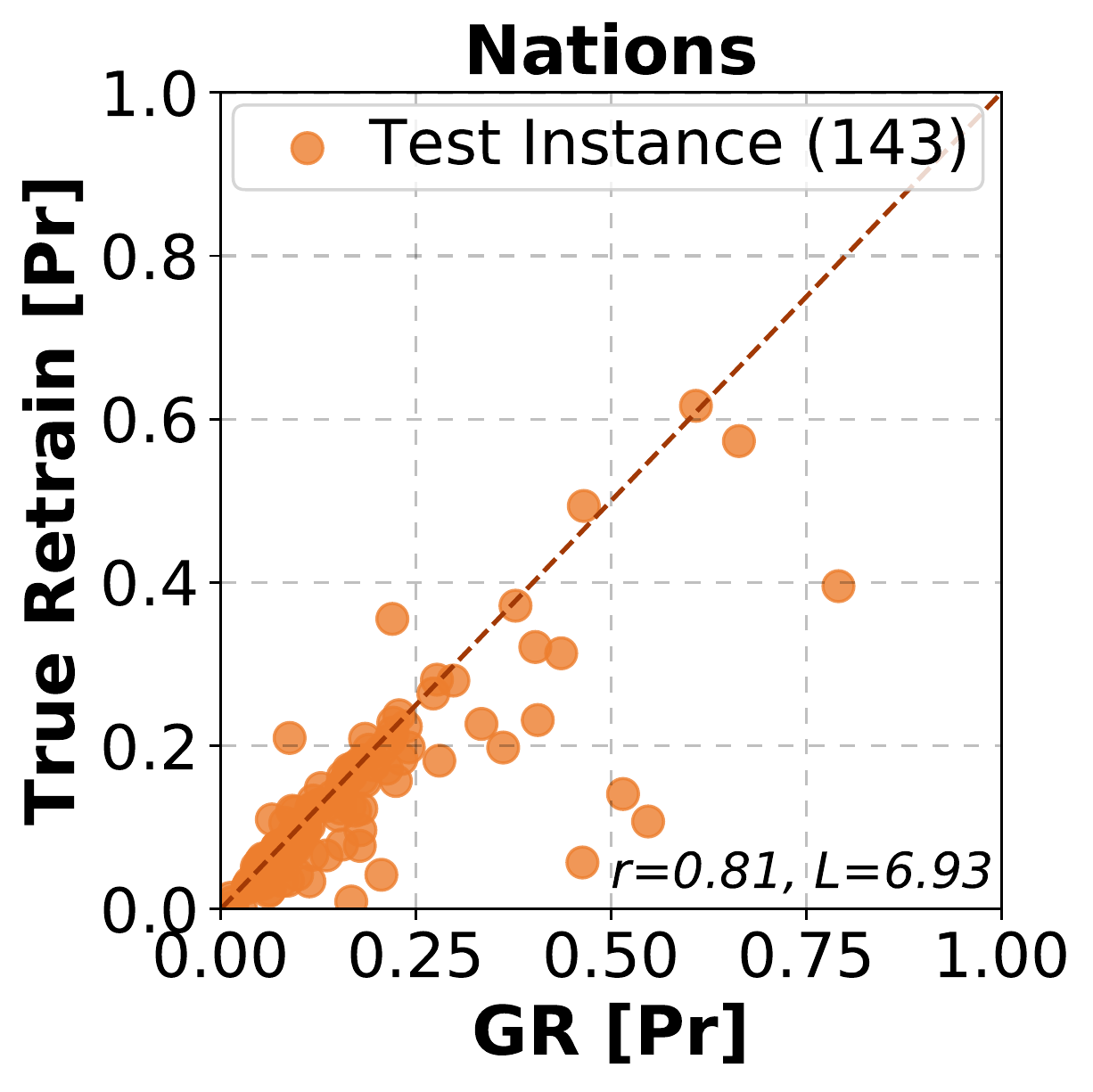}
\end{subfigure}
\begin{subfigure}{.32\textwidth}
	\centering
	\includegraphics[width=0.8\linewidth]{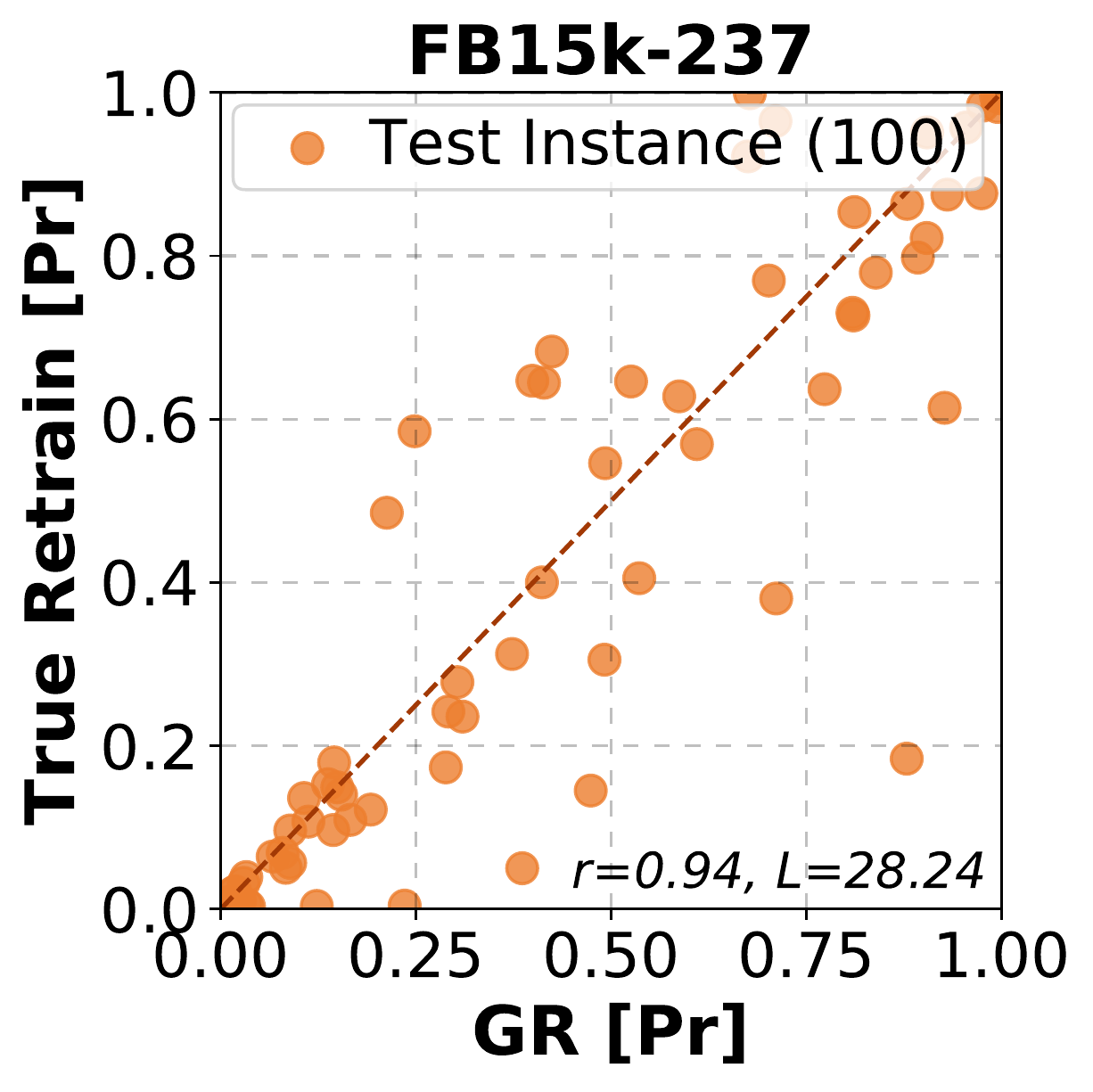}
\end{subfigure}
\begin{subfigure}{.32\textwidth}
	\centering
	\includegraphics[width=0.8\linewidth]{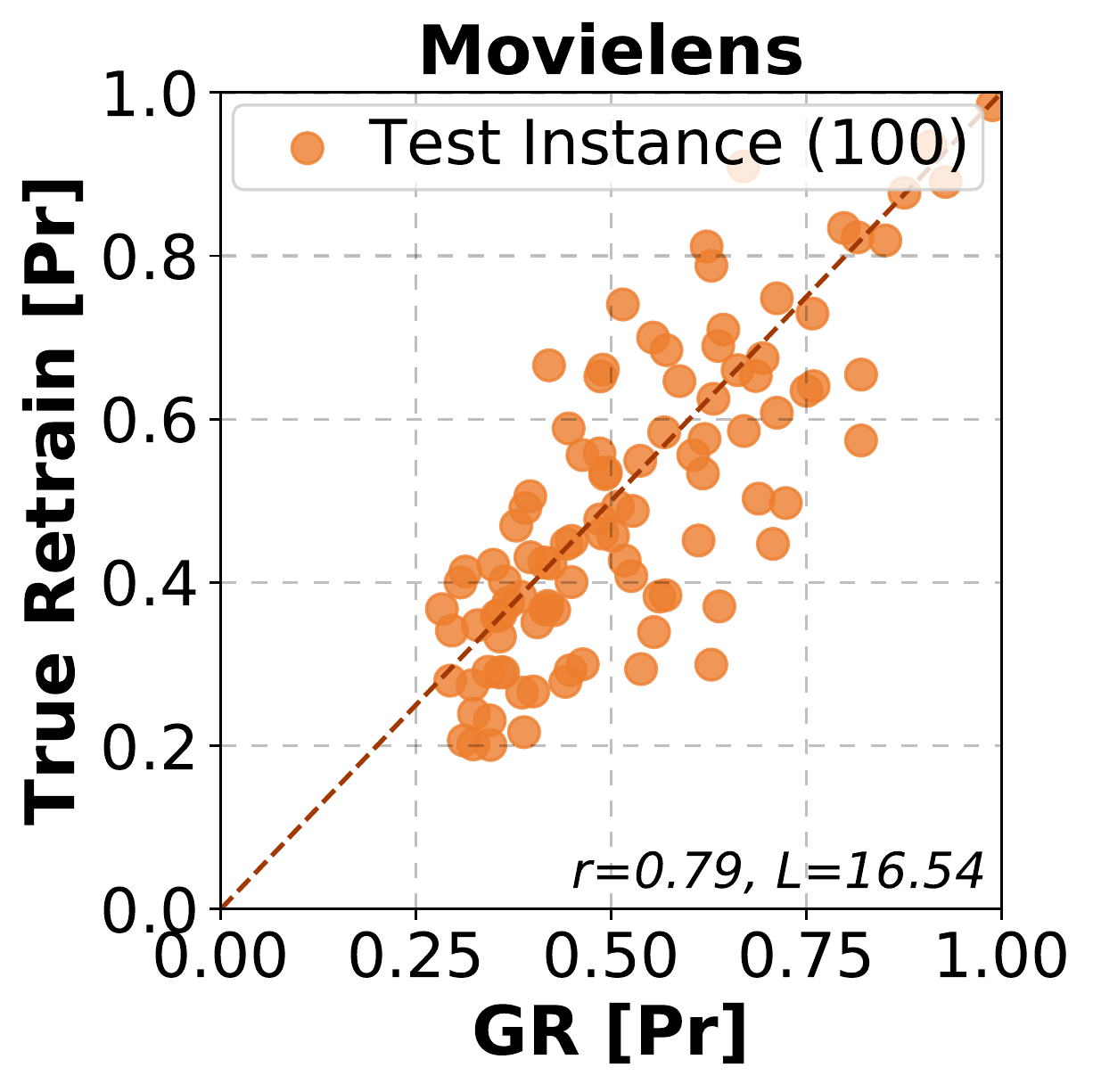}
\end{subfigure}
\begin{subfigure}{.32\textwidth}
	\centering
	\includegraphics[width=0.8\linewidth]{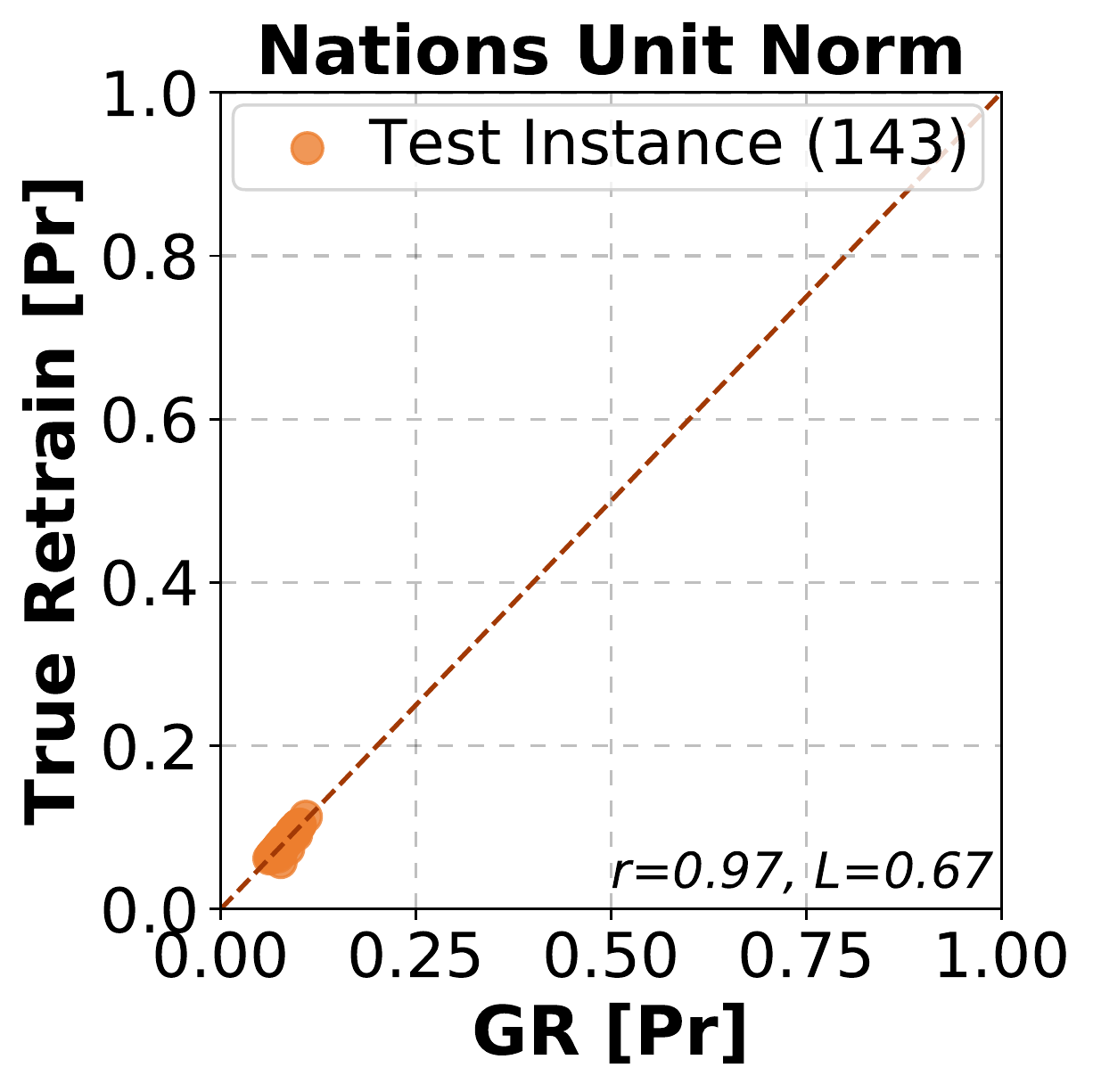}
\end{subfigure}
\begin{subfigure}{.32\textwidth}
	\centering
	\includegraphics[width=0.8\linewidth]{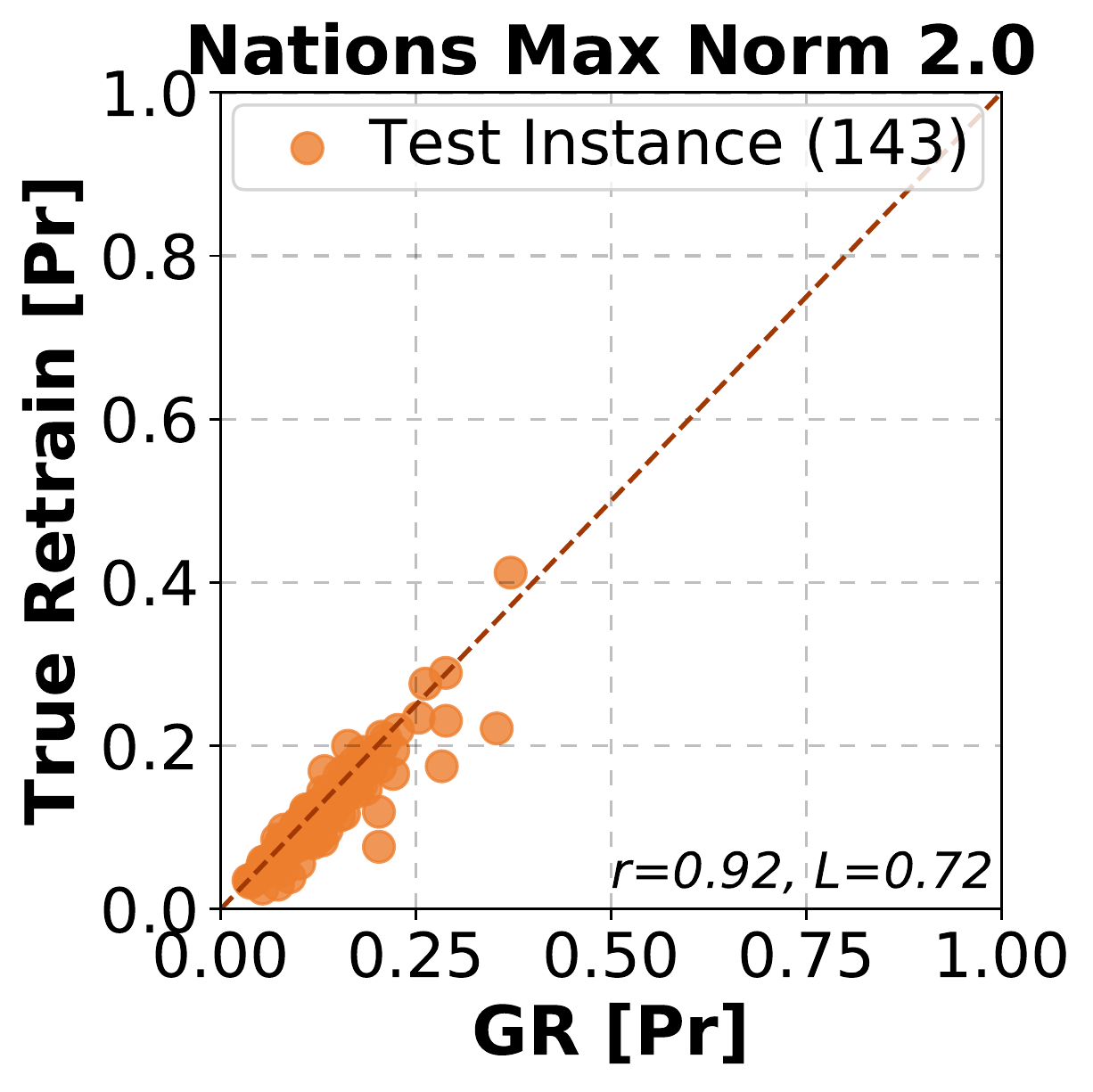}
\end{subfigure}
\begin{subfigure}{.32\textwidth}
	\centering
	\includegraphics[width=0.8\linewidth]{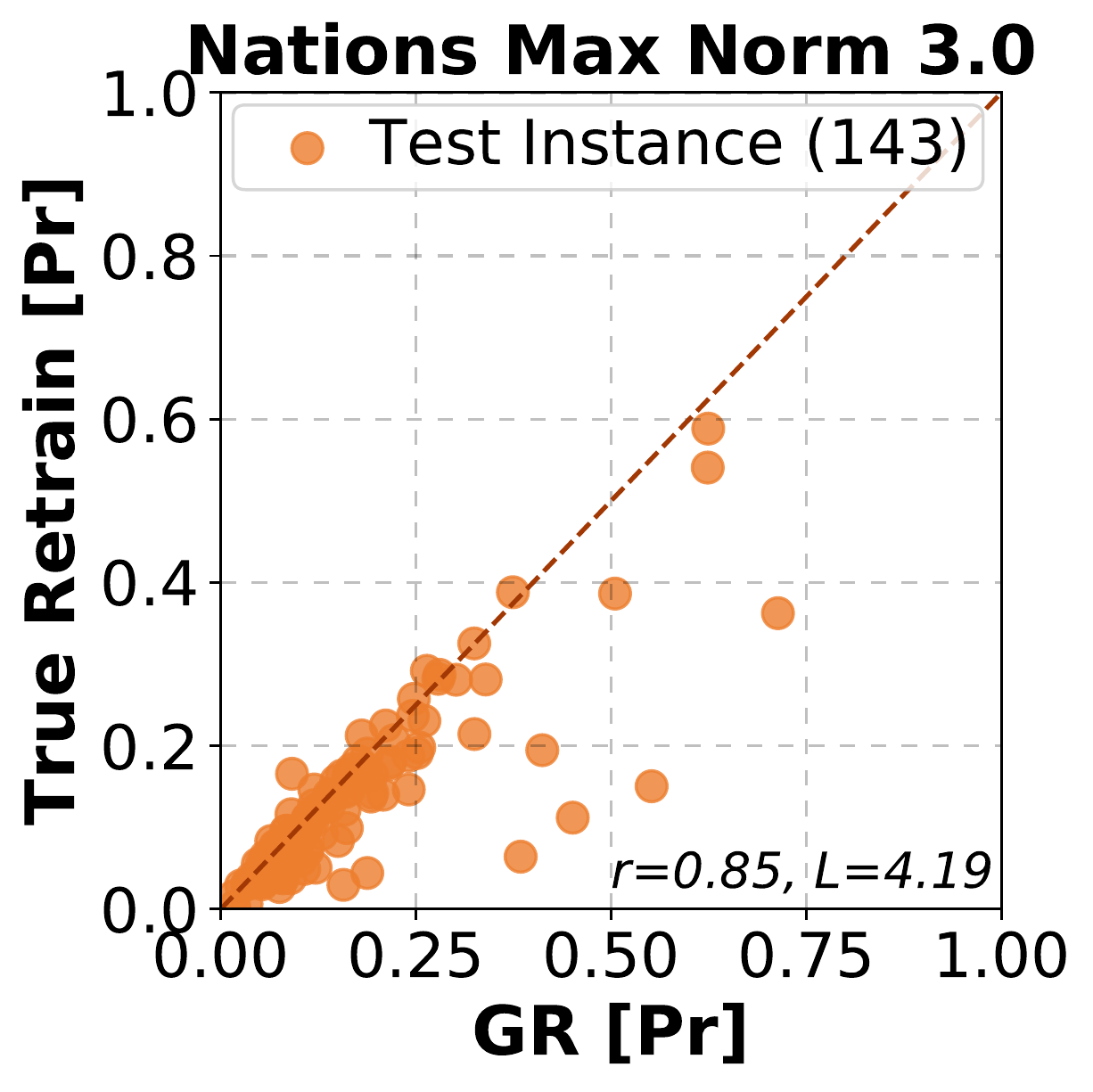}
\end{subfigure}
\caption{\label{fig:scatterplot_GR_vs_retrain}  Scatter plots illustrating the correlation between the probability values estimated by GR and after a retraining of the model, as well as Pearson correlation value \textit{r} and Lipschitz constant \textit{L}. The high correlations indicate that GR is able to accurately approximate the probability and, therefore, the change in probability, of a triple after retraining the model without the triple that GR deemed to cause the highest change in probability. The bottom row shows results when imposing constraints on the weights for the \textsc{Nations} dataset: the stronger the constraint (unit norm $>$ maximum norm $2.0 > 3.0 >$ None), the smaller the Lipschitz constant and the better the Pearson correlation.}
\end{figure*}

%!TEX root = gr-aaai.tex

\textbf{Results.} The top half of Table~\ref{tab:results} lists the results using \textsc{DistMult} for GR and NH. For \textsc{Nations} and \textsc{DistMult}, removing 1 triple from the set of adjacent triples (NH-1) is close to random, causing a drop in probability (PD\%) about half of the time. In contrast, removing the 1 training instance considered most influential by GR (GR-1), gives a PD of over 90\%. GR-1  leads to a top-1 change (TC) in about 40\% of the cases. This suggests that there is more than one influential triple per test triple. When removing all triples identified by GR (GR-\textsc{all}) we observe PD and TC values of nearly 100\%. In contrast, removing the same number of adjacent triples randomly leads to a significantly lower PC and TC. In fact, removing the highest ranked triple (GR-1) impacts the model behavior more than deleting about half the adjacent triples at random (NH-\textsc{all}) in terms of PD\%.

For \textsc{FB15k-237} we observe that GR is again better at identifying influential triples compared to the NH baselines. 
Moreover, only when the NH method deletes over half of the adjacent triples at random (NH-\textsc{all}) does it perform on par with GR-10 in terms of PD\% and TC\%. Crucially, deleting one instance at random (NH-$1$) causes a high TC\% compared to the other two datasets. This suggests that the model is less stable for this dataset. As our theoretical results show, GR works better on more stable models, and this could explain why GR-\textsc{all} is further away from reaching a TC\% value of 100 than on the other two datasets.

For \textsc{Movielens} GR is again able to outperform all respective NH baselines. Furthermore, on this dataset GR-\textsc{all} is able to achieve perfect PD\% and TC\% values. Additionally, GR shows a substantial increase for PD\% and TC\% when moving from $k=1$ to $k=10$, suggesting that more than one training triple is required for an explanation. At the same time, NH-$10$ still does not perform better than random in terms of PD\%. The bottom half of Table~\ref{tab:results} reports the results using \textsc{ComplEx}, which show a similar patterns as \textsc{DistMult}. In the appendix, Figure~\ref{fig:boxplot} depicts the differences between GR and NH: GR is always better at identifying highly influential triples. 

\textbf{Approximation Quality.} In this experiment we select, for each test triple, the training triple $d'$ GR deemed most influential. We then retrain without $d'$ and compare the predicted probability of the test triple with the probability (PR) after retraining the model. Figure~\ref{fig:scatterplot_GR_vs_retrain} shows that GR is able to accurately estimate the probability of the test triple after retraining. For all datasets the correlation is rather high. We also confirm empirically that imposing a constraint on the weights (enforcing unit norm or a maximum norm), reduces the Lipschitz constant of the model and in turn reduces the error of GR, as evidenced by the increase in correlation. In addition to the quantitative results, we also provide some typical example explanations generated by GR and their interpretation in the appendix. Figure~\ref{fig:example-graphs} also shows two qualitative examples.

\textbf{Comparison to \textsc{Criage}.} Like GR, \textsc{Criage} can be used to identify training samples that have a large influence on a prediction. However, \textsc{Criage} has three disadvantages: (1) it only considers training instances with the same object as the prediction as possible explanations; (2) it only works with a \textsc{sigmoid} activation function and not for \textsc{softmax}; (3) retrieving explanations is time consuming, e.g. on \textsc{Movielens} GR can identify an explanation in 3 minutes whereas \textsc{Criage} takes 3 hours. Consequently, we only run \textsc{Criage} on the smaller \textsc{Nations} dataset. For a direct comparison, we introduce GR-O, which like \textsc{Criage} only considers training instances as possible explanations if they have the same object as the prediction. Results for the top-$k \in \{1, 3, 5, 10\}$ are reported in Table \ref{tab:results_criage}. Both GR and GR-O outperform \textsc{Criage} and GR performs best overall for all $k$ and both metrics.

\begin{figure}
	\centering
	\includegraphics[width=0.5\linewidth]{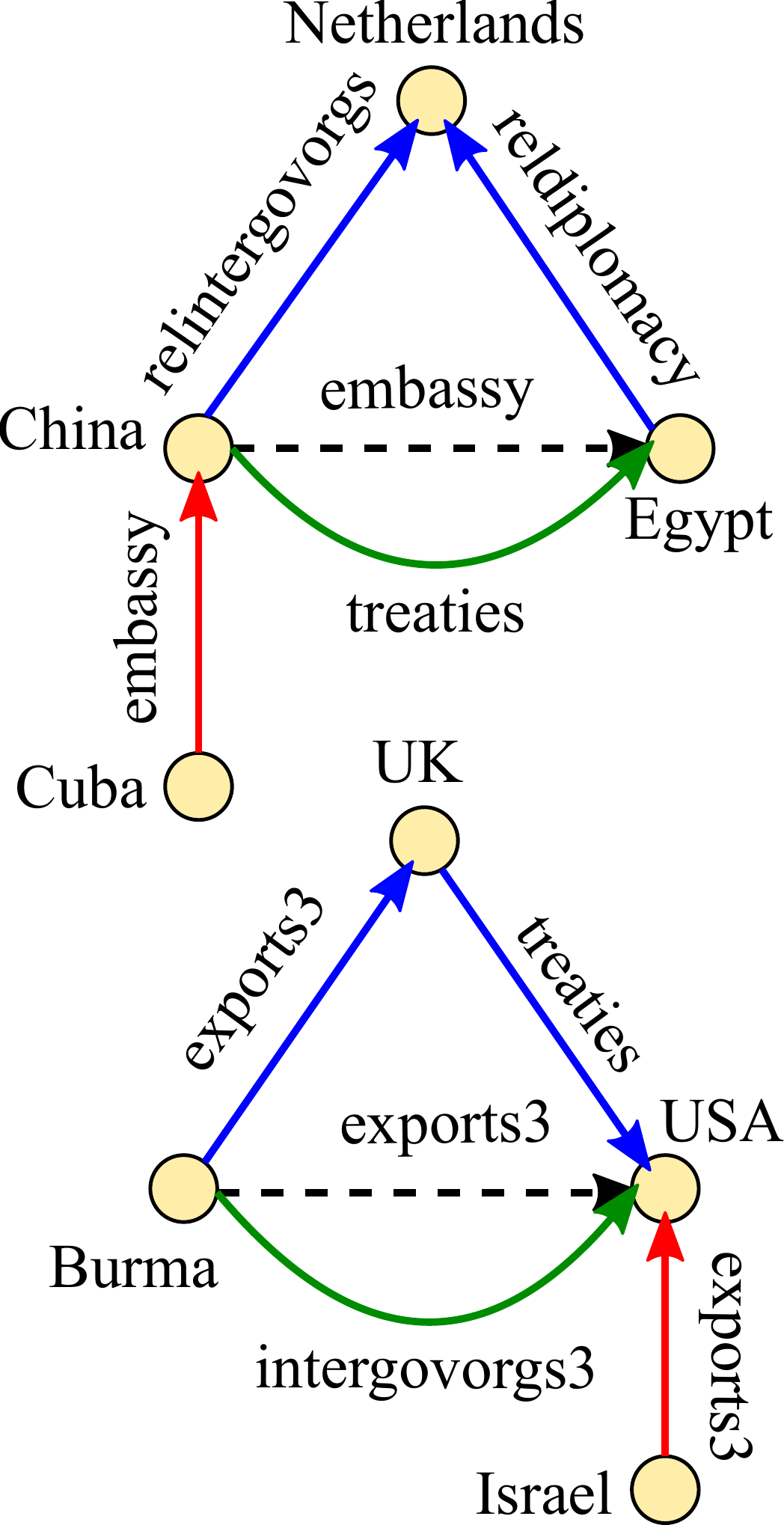}
	\caption{\label{fig:example-graphs}  Some example explanations generated using GR. The dashed line indicates the test triple. Depicted are triples deemed highly influential on model behavior by GR (i) with the same head and tail as test triple (green), (ii) with the same relation type (red), and (iii) a pair of triples via a third entity (blue).}
\end{figure}
%!TEX root = gr-aaai.tex
\section{Conclusion}
Gradient rollback (GR) is a simple yet effective method to track the influence of training samples on the parameters of a model. Due to its efficiency it is  suitable for large neural networks, where each training instance touches only a moderate number of parameters. Instances of this class of models are neural matrix factorization models. To showcase the utility of GR, we first established theoretically that the resource overhead is minimal. Second, we showed that the difference of GR's influence approximation and the true influence on the model behaviour is smaller than known bounds on the stability of stochastic gradient descent. This establishes a link between influence estimation and the stability of models and shows that, if a model is stable, GR's influence estimation error is small. We showed empirically that GR can successfully identify sets of training samples that cause a drop in performance if a model is retrained without this set. In the future we plan to apply GR to other models.

%!TEX root = gr-aaai.tex
\section{Acknowledgements}
We thank Takuma Ebisu for his time and insightful feedback.

\section{Broader Impact}
This paper addresses the problem of explaining and analyzing the behavior of machine learning models. More specifically, we propose a method that is tailored to a class of ML models called matrix factorization methods. These have a wide range of applications and, therefore, also the potential to provide biased and otherwise inappropriate predictions in numerous use cases. For instance, the predictions of a recommender system might be overly gender or race specific. We hope that our proposed influence estimation approach can make these models more interpretable and can lead to improvements of production systems with respect to the aforementioned problems of bias, fairness, and transparency. At the same time, it might also provide a method for an adversary to find weaknesses of the machine learning system and to influence its predictions making them more biased and less fair. In the end, we present a method that can be used in several different applications. We believe, however, that the proposed method is not inherently problematic as it is agnostic to use cases and does not introduce or discuss problematic applications. 

\bibliography{gr-bib}

\begin{thebibliography}{32}
\providecommand{\natexlab}[1]{#1}
\providecommand{\url}[1]{\texttt{#1}}
\providecommand{\urlprefix}{URL }
\expandafter\ifx\csname urlstyle\endcsname\relax
  \providecommand{\doi}[1]{doi:\discretionary{}{}{}#1}\else
  \providecommand{\doi}{doi:\discretionary{}{}{}\begingroup
  \urlstyle{rm}\Url}\fi

\bibitem[{Bianchi et~al.(2020)Bianchi, Rossiello, Costabello, Palmonari, and
  Minervini}]{bianchi2020knowledge}
Bianchi, F.; Rossiello, G.; Costabello, L.; Palmonari, M.; and Minervini, P.
  2020.
\newblock Knowledge Graph Embeddings and Explainable AI.
\newblock In Ilaria~Tiddi, Freddy~Lecue, P.~H., ed., \emph{Knowledge Graphs for
  eXplainable AI -- Foundations, Applications and Challenges. Studies on the
  Semantic Web}. Amsterdam: IOS Press.

\bibitem[{Biggio, Fumera, and Roli(2014)}]{biggio2013security}
Biggio, B.; Fumera, G.; and Roli, F. 2014.
\newblock Security Evaluation of Pattern Classifiers under Attack.
\newblock \emph{IEEE Transactions on Knowledge and Data Engineering} 26(4):
  984--996.

\bibitem[{Bordes et~al.(2013)Bordes, Usunier, Garcia-Duran, Weston, and
  Yakhnenko}]{bordes2013translating}
Bordes, A.; Usunier, N.; Garcia-Duran, A.; Weston, J.; and Yakhnenko, O. 2013.
\newblock Translating Embeddings for Modeling Multi-relational Data.
\newblock In \emph{Advances in Neural Information Processing Systems 26,
  {NeurIPS}}, 2787--2795.

\bibitem[{Bousquet and Elisseeff(2002)}]{bousquet2002stability}
Bousquet, O.; and Elisseeff, A. 2002.
\newblock Stability and Generalization.
\newblock \emph{Journal of Machine Learning Research} 2: 499--526.

\bibitem[{Cai and Wang(2018)}]{kbgan2018}
Cai, L.; and Wang, W.~Y. 2018.
\newblock {KBGAN}: {A}dversarial Learning for Knowledge Graph Embeddings.
\newblock In \emph{Proceedings of the 2018 Conference of the North American
  Chapter of the Association for Computational Linguistics, {NAACL}},
  1470--1480.

\bibitem[{Chen et~al.(2020)Chen, Li, Peng, Xie, Cao, Xu, He, and
  Zheng}]{chen2020survey}
Chen, L.; Li, J.; Peng, J.; Xie, T.; Cao, Z.; Xu, K.; He, X.; and Zheng, Z.
  2020.
\newblock A Survey of Adversarial Learning on Graphs.
\newblock \emph{arXiv} abs/2003.05730: 1--28.

\bibitem[{Dai et~al.(2018)Dai, Li, Tian, Huang, Wang, Zhu, and
  Song}]{dai2018adversarial}
Dai, H.; Li, H.; Tian, T.; Huang, X.; Wang, L.; Zhu, J.; and Song, L. 2018.
\newblock Adversarial Attack on Graph Structured Data.
\newblock In \emph{Proceedings of the 35th International Conference on Machine
  Learning, {ICML}}, 1115--1124.

\bibitem[{Dong et~al.(2017)Dong, Su, Zhu, and Bao}]{dong2017towards}
Dong, Y.; Su, H.; Zhu, J.; and Bao, F. 2017.
\newblock Towards Interpretable Deep Neural Networks by Leveraging Adversarial
  Examples.
\newblock \emph{arXiv} abs/1708.05493: 1--12.

\bibitem[{Ebrahimi et~al.(2018)Ebrahimi, Rao, Lowd, and
  Dou}]{ebrahimi-etal-2018-hotflip}
Ebrahimi, J.; Rao, A.; Lowd, D.; and Dou, D. 2018.
\newblock {H}ot{F}lip: {W}hite-Box Adversarial Examples for Text
  Classification.
\newblock In \emph{Proceedings of the 56th Annual Meeting of the Association
  for Computational Linguistics, {ACL}}, 31--36.

\bibitem[{Guo et~al.(2020)Guo, Zhuang, Qin, Zhu, Xie, Xiong, and
  He}]{guo2020survey}
Guo, Q.; Zhuang, F.; Qin, C.; Zhu, H.; Xie, X.; Xiong, H.; and He, Q. 2020.
\newblock A Survey on Knowledge Graph-Based Recommender Systems.
\newblock \emph{arXiv} abs/2003.00911: 1--17.

\bibitem[{Guo et~al.(2016)Guo, Wang, Wang, Wang, and Guo}]{guo2016jointly}
Guo, S.; Wang, Q.; Wang, L.; Wang, B.; and Guo, L. 2016.
\newblock Jointly Embedding Knowledge Graphs and Logical Rules.
\newblock In \emph{Proceedings of the 2016 Conference on Empirical Methods in
  Natural Language Processing, {EMNLP}}, 192--202.

\bibitem[{Hardt, Recht, and Singer(2016)}]{stability-hardt}
Hardt, M.; Recht, B.; and Singer, Y. 2016.
\newblock Train Faster, Generalize Better: {S}tability of Stochastic Gradient
  Descent.
\newblock In \emph{Proceedings of the 33rd International Conference on
  International Conference on Machine Learning, {ICML}}, 1225--–1234.

\bibitem[{Harper and Konstan(2015)}]{movielens}
Harper, F.~M.; and Konstan, J.~A. 2015.
\newblock The MovieLens Datasets: {H}istory and Context.
\newblock \emph{ACM Transactions on Interactive Intelligent Systems} 5(4).

\bibitem[{Hooker et~al.(2019)Hooker, Erhan, Kindermans, and Kim}]{roar}
Hooker, S.; Erhan, D.; Kindermans, P.-J.; and Kim, B. 2019.
\newblock {A B}enchmark for Interpretability Methods in Deep Neural Networks.
\newblock In \emph{Advances in Neural Information Processing Systems 32,
  {NeurIPS}}, 9737--9748.

\bibitem[{Jacovi and Goldberg(2020)}]{JacoviGoldberg:20}
Jacovi, A.; and Goldberg, Y. 2020.
\newblock Towards Faithfully Interpretable {NLP} Systems: {H}ow should we
  define and evaluate faithfulness?
\newblock In \emph{Proceedings of the 58th Annual Meeting of the Association
  for Computational Linguistics, {ACL}}, 4198–4205.

\bibitem[{Jain et~al.(2020)Jain, Rathi, Chakrabarti et~al.}]{jain2020knowledge}
Jain, P.; Rathi, S.; Chakrabarti, S.; et~al. 2020.
\newblock Knowledge Base Completion: {B}aseline strikes back (Again).
\newblock \emph{arXiv} abs/2005.00804: 1--6.

\bibitem[{Kadlec, Bajgar, and Kleindienst(2017)}]{kadlec2017knowledge}
Kadlec, R.; Bajgar, O.; and Kleindienst, J. 2017.
\newblock Knowledge Base Completion: Baselines Strike Back.
\newblock In \emph{Proceedings of the 2nd Workshop on Representation Learning
  for {NLP}}, 69--74.

\bibitem[{Kingma and Ba(2015)}]{adam}
Kingma, D.~P.; and Ba, J. 2015.
\newblock Adam: {A} Method for Stochastic Optimization.
\newblock In \emph{3rd International Conference on Learning Representations,
  {ICLR}}.

\bibitem[{Koh and Liang(2017)}]{koh2017-influence-functions}
Koh, P.~W.; and Liang, P. 2017.
\newblock Understanding Black-box Predictions via Influence Functions.
\newblock In \emph{Proceedings of the 34th International Conference on Machine
  Learning, {ICML}}, 1885--1894.

\bibitem[{Kok and Domingos(2007)}]{nations}
Kok, S.; and Domingos, P. 2007.
\newblock Statistical Predicate Invention.
\newblock In \emph{Proceedings of the 24th International Conference on Machine
  Learning, ICML}, 433–440.

\bibitem[{Minervini et~al.(2017)Minervini, Demeester, Rockt{\"a}schel, and
  Riedel}]{minervini2017adversarial}
Minervini, P.; Demeester, T.; Rockt{\"a}schel, T.; and Riedel, S. 2017.
\newblock Adversarial Sets for Regularising Neural Link Predictors.
\newblock In \emph{Conference on Uncertainty in Artificial Intelligence,
  {UAI}}, 9240--9251.

\bibitem[{Nickel, Tresp, and Kriegel(2011)}]{nickel2011three}
Nickel, M.; Tresp, V.; and Kriegel, H.-P. 2011.
\newblock A Three-Way Model for Collective Learning on Multi-Relational Data.
\newblock In \emph{Proceedings of the 28th International Conference on
  International Conference on Machine Learning, {ICML}}, 809--816.

\bibitem[{Papernot et~al.(2016)Papernot, McDaniel, Jha, Fredrikson, Celik, and
  Swami}]{papernot2016limitations}
Papernot, N.; McDaniel, P.; Jha, S.; Fredrikson, M.; Celik, Z.~B.; and Swami,
  A. 2016.
\newblock The Limitations of Deep Learning in Adversarial Settings.
\newblock In \emph{2016 IEEE European Symposium on Security and Privacy,
  {EuroS\&P}}, 372--387.

\bibitem[{Pezeshkpour, Chen, and Singh(2018)}]{pezeshkpour-etal-2018-embedding}
Pezeshkpour, P.; Chen, L.; and Singh, S. 2018.
\newblock Embedding Multimodal Relational Data for Knowledge Base Completion.
\newblock In \emph{Proceedings of the 2018 Conference on Empirical Methods in
  Natural Language Processing, {EMNLP}}, 3208--3218.

\bibitem[{Pezeshkpour, Tian, and Singh(2019)}]{criage:naacl19}
Pezeshkpour, P.; Tian, Y.; and Singh, S. 2019.
\newblock Investigating Robustness and Interpretability of Link Prediction via
  Adversarial Modifications.
\newblock In \emph{Proceedings of the 2019 Conference of the North American
  Chapter of the Association for Computational Linguistics, {NAACL}},
  3336--3347.

\bibitem[{Rockt{\"a}schel, Singh, and Riedel(2015)}]{rocktaschel2015injecting}
Rockt{\"a}schel, T.; Singh, S.; and Riedel, S. 2015.
\newblock Injecting Logical Background Knowledge into Embeddings for Relation
  Extraction.
\newblock In \emph{Proceedings of the 2015 Conference of the North American
  Chapter of the Association for Computational Linguistics, {NAACL}},
  1119--1129.

\bibitem[{Ruffinelli, Broscheit, and Gemulla(2020)}]{RuffinelliBG20}
Ruffinelli, D.; Broscheit, S.; and Gemulla, R. 2020.
\newblock You {CAN} Teach an Old Dog New Tricks! {O}n Training Knowledge Graph
  Embeddings.
\newblock In \emph{8th International Conference on Learning Representations,
  {ICLR}}.

\bibitem[{Toutanova et~al.(2015)Toutanova, Chen, Pantel, Poon, Choudhury, and
  Gamon}]{fb15k-237}
Toutanova, K.; Chen, D.; Pantel, P.; Poon, H.; Choudhury, P.; and Gamon, M.
  2015.
\newblock Representing Text for Joint Embedding of Text and Knowledge Bases.
\newblock In \emph{Proceedings of the 2015 Conference on Empirical Methods in
  Natural Language Processing, {EMNLP}}, 1499--1509.

\bibitem[{Trouillon et~al.(2016)Trouillon, Welbl, Riedel, Gaussier, and
  Bouchard}]{trouillon2016complex}
Trouillon, T.; Welbl, J.; Riedel, S.; Gaussier, {\'E}.; and Bouchard, G. 2016.
\newblock Complex Embeddings for Simple Link Prediction.
\newblock In \emph{International Conference on Machine Learning, {ICML}},
  2071--2080.

\bibitem[{Yang et~al.(2015)Yang, Yih, He, Gao, and Deng}]{distmult}
Yang, B.; Yih, W.; He, X.; Gao, J.; and Deng, L. 2015.
\newblock Embedding Entities and Relations for Learning and Inference in
  Knowledge Bases.
\newblock In \emph{3rd International Conference on Learning Representations,
  {ICLR}}.

\bibitem[{Ying et~al.(2019)Ying, Bourgeois, You, Zitnik, and
  Leskovec}]{ying2019gnnexplainer}
Ying, Z.; Bourgeois, D.; You, J.; Zitnik, M.; and Leskovec, J. 2019.
\newblock {GNNE}xplainer: {G}enerating Explanations for Graph Neural Networks.
\newblock In \emph{Advances in Neural Information Processing Systems 32,
  {NeurIPS}}, 9240--9251.

\bibitem[{Z{\"u}gner, Akbarnejad, and
  G{\"u}nnemann(2018)}]{zugner2018adversarial}
Z{\"u}gner, D.; Akbarnejad, A.; and G{\"u}nnemann, S. 2018.
\newblock Adversarial Attacks on Neural Networks for Graph Data.
\newblock In \emph{Proceedings of the 24th ACM SIGKDD International Conference
  on Knowledge Discovery \& Data Mining}, 2847--2856.

\end{thebibliography}

\clearpage
%!TEX root = gr-aaai.tex

\begin{appendices}

\section{Definitions, Lemmas, and Proofs}

The following lemma simplifies the derivation of Lipschitz and smoothness bounds for tuple-based scoring functions. 

\begin{lemma}
	\label{lemma-lipschitz1-3}
	Let $f: \mathbb{R}^k \times \mathbb{R}^l \times \mathbb{R}^m \rightarrow \mathbb{R}^n: (x, y, z) \rightarrow f(x, y, z)$ be Lipschitz continuous relative to, respectively, $x$, $y$, and $z$. Then $f$ is Lipschitz continuous as a function $\mathbb{R}^{k+l+m} \rightarrow \mathbb{R}^n$. More specifically, 
	$| f(u_x, u_y, u_z) - f(v_x, v_y, v_z)| \leq 2 \max\{L_x, L_y, L_z\} \norm{(u_x,u_y,u_z) - (v_x,v_y,u_z)}.$
\end{lemma}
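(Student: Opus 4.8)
The plan is to reduce the joint Lipschitz estimate to the three coordinatewise hypotheses by a standard telescoping (interpolation) argument that changes one block of variables at a time, and then to convert the resulting sum of block-norms into the joint $2$-norm via an elementary inequality. First I would interpolate between the two input points by inserting the two intermediate points $(v_x, u_y, u_z)$ and $(v_x, v_y, u_z)$, writing
\[
f(u_x,u_y,u_z) - f(v_x,v_y,v_z)
= \big[f(u_x,u_y,u_z) - f(v_x,u_y,u_z)\big]
+ \big[f(v_x,u_y,u_z) - f(v_x,v_y,u_z)\big]
+ \big[f(v_x,v_y,u_z) - f(v_x,v_y,v_z)\big].
\]
Each bracket differs in exactly one block of coordinates while the other two blocks are frozen, and since the domain is all of $\mathbb{R}^{k+l+m}$ every intermediate point is admissible. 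Applying the triangle inequality together with the coordinatewise Lipschitz hypotheses (with constants $L_x, L_y, L_z$, uniform over the frozen arguments) to the respective brackets gives
\[
|f(u_x,u_y,u_z) - f(v_x,v_y,v_z)| \le L_x \norm{u_x - v_x} + L_y \norm{u_y - v_y} + L_z \norm{u_z - v_z}.
\]

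Next I would factor out $M := \max\{L_x, L_y, L_z\}$ to reach the bound $M\big(\norm{u_x - v_x} + \norm{u_y - v_y} + \norm{u_z - v_z}\big)$, and finish by comparing this sum of block-norms to the joint norm. Setting $a = \norm{u_x - v_x}$, $b = \norm{u_y - v_y}$, $c = \norm{u_z - v_z}$, the joint $2$-norm of $(u_x,u_y,u_z) - (v_x,v_y,v_z)$ is exactly $\sqrt{a^2+b^2+c^2}$, and the elementary inequality $a + b + c \le 2\sqrt{a^2+b^2+c^2}$ produces the stated constant $2M$. This last inequality follows by squaring: $(a+b+c)^2 = a^2+b^2+c^2 + 2(ab+bc+ca)$, and applying $2ab \le a^2+b^2$ to each cross term bounds it by $3(a^2+b^2+c^2) \le 4(a^2+b^2+c^2)$. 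Cauchy--Schwarz in fact yields the sharper constant $\sqrt{3}$ for three blocks; the value $2$ is a convenient rounding that is exactly what the downstream DistMult bound $2C^2$ (Lemma~\ref{lemma-lipschitz-distmult}) needs, since there each per-argument constant equals $C^2$.

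I do not expect a genuine obstacle here, as the argument is essentially the triangle inequality plus one elementary norm comparison. The only points requiring care are (i) ensuring that each coordinatewise Lipschitz constant is uniform over the frozen values of the other two arguments, so that every telescoping step is legitimate, and (ii) tracking which single block varies in each bracket so the correct constant $L_x$, $L_y$, or $L_z$ is applied. Both are purely bookkeeping once the interpolation order is fixed.
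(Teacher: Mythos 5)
Your proposal is correct and takes essentially the same route as the paper's proof: both telescope through intermediate points that change one block of coordinates at a time, apply the three coordinatewise Lipschitz bounds, factor out $\max\{L_x,L_y,L_z\}$, and then compare the sum of block-norms $a+b+c$ to the joint $2$-norm $\sqrt{a^2+b^2+c^2}$. The only (cosmetic) difference is in that last comparison: the paper zero-pads and pairs the blocks, applying the two-term inequality $a+b\leq\sqrt{2}\sqrt{a^2+b^2}$ twice to accumulate the factor $\sqrt{2}\cdot\sqrt{2}=2$, whereas you bound $a+b+c\leq 2\sqrt{a^2+b^2+c^2}$ directly by expanding the square --- which, as you note, in fact yields the sharper constant $\sqrt{3}$.
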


\begin{proof}
	$|f(u_x, u_y, u_z) - f(v_x, v_y, v_z)|$
	\begin{linenomath*}
		\begin{align*}
		= & \lvert\lvert f(u_x, u_y, u_z) - f(u_x, u_y, v_z)\\& + f(u_x, u_y, v_z) - f(v_x, v_y, v_z) \rvert\rvert \\
		\leq &   \norm{f(u_x, u_y, u_z) - f(u_x, u_y, v_z)} \\&+ \norm{f(u_x, u_y, v_z) - f(v_x, v_y, v_z)} \\
		= &   \norm{f(u_x, u_y, u_z) - f(u_x, u_y, v_z)} \\&+ \lvert\lvert f(u_x, u_y, v_z) - f(u_x, v_y, v_z) \\&+ f(u_x, v_y, v_z) - f(v_x, v_y, v_z) \rvert\rvert\\		
		\leq & \norm{f(u_x, u_y, u_z) - f(u_x, u_y, v_z)} \\&+ \norm{f(u_x, u_y, v_z) - f(u_x, v_y, v_z)} \\&+ \norm{f(u_x, v_y, v_z) - f(v_x, v_y, v_z)}\\	
		\leq & L_z \norm{u_z - v_z} + L_y \norm{u_y - v_y} + L_x \norm{u_x - v_x} \\
		\leq & \max\{L_x, L_y, L_z\} \left( \norm{u_x - v_x} + \norm{u_y - v_y} \right.\\&\left. + \norm{u_z - v_z} \right) \\
		= & \max\{L_x, L_y, L_z\} \left( \norm{(u_x,\bfnull) - (v_x,\bfnull)} \right.\\&\left.+ \norm{(\bfnull, u_y) - (\bfnull,v_y)} + \norm{u_z - v_z} \right) \\
		\leq & \max\{L_x, L_y, L_z\} \left( \sqrt{2} \norm{(u_x,u_y) - (v_x,v_y)} \right.\\&\left. + \norm{u_z - v_z} \right) \\
		= & \max\{L_x, L_y, L_z\} \left( \sqrt{2} \norm{(u_x,u_y,\bfnull) - (v_x,v_y,\bfnull)}  \right.\\&\left.+ \norm{(\bfnull,\bfnull,u_z) - (\bfnull,\bfnull,v_z)} \right) \\
		\leq & \max\{L_x, L_y, L_z\} \left( 2 \norm{(u_x,u_y,u_z) - (v_x,v_y,u_z)} \right) \\
		= & 2 \max\{L_x, L_y, L_z\} \norm{(u_x,u_y,u_z) - (v_x,v_y,u_z)}.
		\end{align*}
	\end{linenomath*}
\end{proof}

We can now use Lemma~\ref{lemma-lipschitz1-3} to derive the Lipschitz constant for the \textsc{DistMult} scoring function and a bound on its smoothness. 

\begin{lemma}
\label{lemma-lipschitz-distmult}
	Let $\phi$ be the scoring function of \textsc{DistMult} defined as
	$\phi(\bmw; d = (s, r, o)) = \langle \bfs, \bfr, \bfo \rangle$ with $\bmw(d) = (\bfs, \bfr, \bfo)$, and let $C$ be the bound on the norm of the embedding vectors for all $\bmw \in \Omega$.
	For a given triple $d = (s, r, o)$ and all $\bmw, \bmw' \in \Omega$, we have that
	\[|\phi(\bmw; d) - \phi(\bmw'; d)| \leq 2 C^2 \norm{\bmw  - \bmw'}.\]
\end{lemma}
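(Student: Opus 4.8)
The plan is to invoke Lemma~\ref{lemma-lipschitz1-3}, which reduces the claim to establishing a per-argument Lipschitz constant for $\phi$. Since $\phi(\bfs,\bfr,\bfo) = \langle \bfs, \bfr, \bfo\rangle = \sum_{i=1}^h s_i r_i o_i$, fixing any two of the three embeddings makes $\phi$ a \emph{linear} function of the remaining one, so its per-argument Lipschitz constant is simply the norm of the corresponding gradient.

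First I would treat the subject argument. With $\bfr$ and $\bfo$ held fixed, $\phi(\cdot, \bfr, \bfo)$ is linear in $\bfs$ with gradient equal to the element-wise product $\bfr \odot \bfo$ (the vector whose $i$-th entry is $r_i o_i$). Hence $L_s = \norm{\bfr \odot \bfo}$, and I bound this by $\norm{\bfr}\norm{\bfo} \leq C^2$ using the standing assumption $\norm{\bmw[r]}, \norm{\bmw[o]} \leq C$. The inequality $\norm{\bfr \odot \bfo} \leq \norm{\bfr}\norm{\bfo}$ follows from $\sum_i r_i^2 o_i^2 \leq \left(\sum_i r_i^2\right)\left(\sum_j o_j^2\right)$. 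Because the triple inner product is symmetric in its three arguments, the identical estimate holds for the relation and object arguments, so $\max\{L_s, L_r, L_o\} = C^2$.

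Applying Lemma~\ref{lemma-lipschitz1-3} with this common per-argument constant then yields $|\phi(\bmw;d) - \phi(\bmw';d)| \leq 2C^2 \norm{\bmw(d) - \bmw'(d)}$, where $\bmw(d) = (\bfs,\bfr,\bfo)$ and $\bmw'(d) = (\bfs',\bfr',\bfo')$ are the triples of parameter vectors associated with $d$. The final step lifts this to the full parameter vectors: since $\phi(\bmw;d)$ depends on $\bmw$ only through the coordinates $\bmw(d)$, the difference $\bmw(d) - \bmw'(d)$ is the restriction of $\bmw - \bmw'$ to those coordinates, and therefore $\norm{\bmw(d) - \bmw'(d)} \leq \norm{\bmw - \bmw'}$, giving the stated bound.

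The only genuine obstacle is the Hadamard-product estimate $\norm{\bfr \odot \bfo} \leq \norm{\bfr}\norm{\bfo}$; everything else is bookkeeping around linearity and the coordinate-restriction step. It is worth checking that this bound is as tight as one can make it, since any slack in the factor $C^2$ propagates directly into the constant $2C^2$ and hence into all downstream smoothness, stability, and approximation-error bounds that build on this lemma.
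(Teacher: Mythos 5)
Your proposal is correct and takes essentially the same route as the paper: both arguments reduce the claim to per-argument Lipschitz constants via Lemma~\ref{lemma-lipschitz1-3} and bound each such constant by $C^2$ through the estimate $\norm{\bfr \odot \bfo} \leq \norm{\bfr}\norm{\bfo} \leq C^2$, your ``gradient of a linear function'' framing being precisely the paper's Cauchy--Schwarz step $|\langle \bfs\bfr, \bfo-\bfo'\rangle| \leq \norm{\bfs\bfr}\norm{\bfo-\bfo'}$. The only difference is cosmetic: you make explicit the coordinate-restriction step $\norm{\bmw(d)-\bmw'(d)} \leq \norm{\bmw-\bmw'}$, which the paper leaves implicit.
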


\begin{proof}
Let $\bmw, \bmw' \in \Omega$ and let $(\bfs, \bfr, \bfo)= (\bmw[s], \bmw[r], \bmw[o])$ and $(\bfs',\bfr',\bfo') = (\bmw'[s],\bmw'[r],\bmw'[o])$. Without loss of generality, assume that $\norm{\bfs \bfr} \geq \norm{\bfs' \bfr'}$. 
	We first show that 
	\begin{linenomath*}
		\begin{align*}
		| \langle \bfs, \bfr, \bfo \rangle - \langle \bfs, \bfr, \bfo'\rangle| & = |\langle \bfs\bfr, (\bfo-\bfo')\rangle| \\
		& \leq \norm{\bfs \bfr}\norm{\bfo-\bfo'}  \\
		& = \norm{\bfs \bfr}  \norm{(\bfs,\bfr,\bfo)-(\bfs, \bfr, \bfo')} \\
		& \leq \norm{\bfs}\norm{\bfr}  \norm{(\bfs,\bfr,\bfo)-(\bfs, \bfr, \bfo')} \\
		& \leq C^2 \norm{(\bfs,\bfr,\bfo)-(\bfs, \bfr, \bfo')}.
		\end{align*}
	\end{linenomath*}
The first inequality is by Cauchy-Schwarz. The last inequality is by the assumption that the norm of the embedding vectors for all $\bmw \in \Omega$ is bounded by $C$. We can repeat the above derivation for, respectively, $\bfr'$ and $\bfs'$. Hence, we can apply Lemma~\ref{lemma-lipschitz1-3} to conclude the proof.
\end{proof}

\begin{lemma}
\label{lemma-smooth-distmult}
	Let $\phi$ be the scoring function of \textsc{DistMult} defined as 
	$\phi(\bmw; d = (s, r, o)) = \langle \bfs, \bfr, \bfo \rangle$ with $\bmw(d) = (\bfs, \bfr, \bfo)$, and let $C$ be the bound on the norm of the embedding vectors for all $\bmw \in \Omega$.
	For a given triple $d = (s, r, o)$ and all $\bmw, \bmw' \in \Omega$, we have that
	\[ \norm{\nabla \phi(\bmw; d) - \nabla \phi(\bmw', d)} \leq 4C \norm{\bmw - \bmw'}.\]
\end{lemma}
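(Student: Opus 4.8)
The plan is to compute the gradient of the DistMult score explicitly and then reduce the claim to a coordinate-wise Lipschitz estimate that feeds into Lemma~\ref{lemma-lipschitz1-3}, exactly mirroring the proof of Lemma~\ref{lemma-lipschitz-distmult}. Writing $\phi(\bmw; d) = \langle \bfs, \bfr, \bfo \rangle = \sum_i s_i r_i o_i$, the only nonzero blocks of $\nabla \phi(\bmw; d)$ are those pertaining to the coordinates of $s$, $r$, and $o$, and differentiating gives $\nabla \phi(\bmw; d) = (\bfr \odot \bfo,\ \bfs \odot \bfo,\ \bfs \odot \bfr)$, where $\odot$ is the element-wise product. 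Setting $(\bfs', \bfr', \bfo') = (\bmw'[s], \bmw'[r], \bmw'[o])$, the difference $\nabla \phi(\bmw; d) - \nabla \phi(\bmw'; d)$ involves only these three blocks, so its norm equals the norm of the tuple-valued map $F(\bfs, \bfr, \bfo) = (\bfr \odot \bfo,\ \bfs \odot \bfo,\ \bfs \odot \bfr)$ evaluated at the two argument triples; in particular $\norm{\nabla\phi(\bmw;d)-\nabla\phi(\bmw';d)} = \norm{F(\bfs,\bfr,\bfo) - F(\bfs',\bfr',\bfo')}$ and this is bounded by $\norm{\bmw - \bmw'}$ times the Lipschitz constant of $F$, since discarding the coordinates outside $s,r,o$ only shrinks the norm.

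The key elementary estimate is that for any two vectors $\norm{\mathbf{a} \odot \mathbf{b}}_2 \leq \norm{\mathbf{a}}_\infty \norm{\mathbf{b}}_2 \leq \norm{\mathbf{a}}_2 \norm{\mathbf{b}}_2$, which follows from $\sum_i a_i^2 b_i^2 \leq (\max_i a_i^2) \sum_i b_i^2$. Combined with the standing assumption $\norm{\bmw[e]}, \norm{\bmw[r]} \leq C$, this yields $\norm{\mathbf{a} \odot \mathbf{b}} \leq C \norm{\mathbf{b}}$ whenever $\mathbf{a}$ is one of the bounded embedding vectors.

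Next I would establish the per-coordinate Lipschitz constants of $F$. Fixing $\bfr, \bfo$ and varying only the first argument, $F(\bfs, \bfr, \bfo) - F(\bfs', \bfr, \bfo) = (\bfnull,\ (\bfs - \bfs') \odot \bfo,\ (\bfs - \bfs') \odot \bfr)$; bounding its Euclidean norm by the sum of the norms of its three blocks and applying the elementary estimate gives $C \norm{\bfs - \bfs'} + C \norm{\bfs - \bfs'} = 2C \norm{\bfs - \bfs'}$, so $L_{\bfs} = 2C$. By the complete symmetry of $\phi$ in its three arguments, the identical computation gives $L_{\bfr} = L_{\bfo} = 2C$. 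Feeding these into Lemma~\ref{lemma-lipschitz1-3} produces the overall constant $2 \max\{L_{\bfs}, L_{\bfr}, L_{\bfo}\} = 4C$, which is exactly the claimed bound.

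I expect the only real care to be in the element-wise product estimate and in tracking which factor is bounded by $C$; everything else is a mechanical reuse of the tuple-Lipschitz lemma. One subtlety worth flagging is that bounding the block-concatenated norm by the \emph{sum} of block norms (rather than keeping the Euclidean $\sqrt{2}\,C$ one could extract) is what yields the clean constant $4C$ stated in the lemma; since only an upper bound is needed, this looseness is harmless.
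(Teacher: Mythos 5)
Your proof is correct and takes essentially the same route as the paper's: compute the gradient blocks $(\bfr \odot \bfo,\ \bfs \odot \bfo,\ \bfs \odot \bfr)$, bound the per-argument variation by the sum of block norms and the Hadamard-product estimate to get the per-coordinate Lipschitz constant $2C$, and invoke Lemma~\ref{lemma-lipschitz1-3} to obtain $2 \cdot 2C = 4C$. The only differences are cosmetic: you vary $\bfs$ where the paper varies $\bfo$ (symmetric anyway), and you spell out the elementary bound $\norm{\mathbf{a} \odot \mathbf{b}} \leq \norm{\mathbf{a}}\,\norm{\mathbf{b}}$ that the paper uses without comment.
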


\begin{proof}
Let $\bmw, \bmw' \in \Omega$ and let $(\bfs, \bfr, \bfo)= (\bmw[s], \bmw[r], \bmw[o])$ and $(\bfs',\bfr',\bfo') = (\bmw'[s],\bmw'[r],\bmw'[o])$. 
	We first show that 		
		\begin{align*}		
		&\hspace{-1.5cm} \norm{\nabla \langle \bfs, \bfr, \bfo \rangle - \nabla \langle \bfs, \bfr, \bfo'\rangle} \\
		& = \norm{(\bfr \bfo, \bfs\bfo, \bfs\bfr) - (\bfr \bfo', \bfs\bfo', \bfs\bfr)} \\
		& = \norm{ \bfr(\bfo-\bfo'), \bfs(\bfo-\bfo'), \bm{0})} \\
		& \leq \norm{ \bfr(\bfo-\bfo') } + \norm{\bfs(\bfo-\bfo')} \\
		& \leq \norm{\bfr}\norm{\bfo-\bfo'} + \norm{\bfs}\norm{\bfo-\bfo'} \\
		& = \left(\norm{\bfr} + \norm{\bfs}\right)\norm{\bfo-\bfo'} \\
		& = (\norm{\bfr} + \norm{\bfs})  \norm{(\bfs,\bfr,\bfo)-(\bfs, \bfr, \bfo')} \\
		& \leq 2C \norm{(\bfs,\bfr,\bfo)-(\bfs, \bfr, \bfo')}.
		\end{align*}
	Hence, we have that $\nabla\langle \bfs, \bfr, \bfo\rangle$ is $2C$-Lipschitz with respect to $\bfo$. We can repeat the above derivation for, respectively, $\bfr'$ and $\bfs'$. Hence, we can apply Lemma~\ref{lemma-lipschitz1-3} to conclude the proof.
\end{proof}

We first need another definition.

\begin{definition}
	An update rule $G: \Omega \rightarrow \Omega$ is $\sigma$-bounded if 
	\[ \sup_{\bmw \in \Omega} \norm{\bmw - G(\bmw)} \leq \sigma .\]
\end{definition}

Stochastic gradient descent (SGD) is the algorithm resulting from performing stochastic gradient
updates $T$ times where the indices of training examples are randomly chosen. There are two popular sampling approaches for gradient descent type algorithms. One is to choose an example uniformly at random at each step. The other is to choose a random permutation and cycle through the examples repeatedly. We focus on the latter sampling strategy but the presented results can also be extended to the former.

We can now prove the stability of SGD on two sets of training samples.

\begin{theorem}
	\label{theorem-stab-bound}
	Let $f(\cdot; d)\in[0,1]$ be an $L$-Lipschitz and $\beta$-smooth function for
	every possible triple $d$ and let $c$ be the initial learning rate. Suppose we run SGD for $T$ steps with monotonically non-increasing step sizes $\alpha_t \leq c/t$ on two different sets of triples $\mathcal{D}$ and $\mathcal{D}-\{d'\}$. 
	Then, for any $d$, 
	\begin{linenomath*}
		\begin{align*} \EX |f(\bmw_T; d) - f(\bmw'_T; d)| \leq \Lambda_{\mbox{stab-nc}} \\
		:=  \frac{1+1/\beta c}{n-1}(cL^2)^{\frac{1}{\beta c + 1}}T^{\frac{\beta c}{\beta c + 1}},\end{align*}
	\end{linenomath*}
	with $\bmw_T$ and $\bmw_T'$ the parameters of the two models after running SGD.
\end{theorem}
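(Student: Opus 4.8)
The plan is to adapt the non-convex stability argument of \citet{stability-hardt} to a coupling of the two SGD trajectories $\bmw_t$ (trained on $\mathcal{D}$, of size $n$) and $\bmw_t'$ (trained on $\mathcal{D}-\{d'\}$, of size $n-1$), run with shared sampling so that at each step both runs draw the same training triple whenever possible. I would track the distance $\delta_t = \norm{\bmw_t - \bmw_t'}$ and work with $\Delta_t = \EX[\delta_t \mid \delta_{t_0}=0]$ for a cutoff index $t_0$ to be optimized at the end.

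First I would establish a one-step growth recursion for $\Delta_t$. At any step, with probability $1-1/n$ the drawn triple is common to both sets, so the \emph{same} gradient update is applied to both parameter vectors; since $f$ is $\beta$-smooth this update is $(1+\alpha_t\beta)$-expansive (the $\bfgamma=\bfnull$ instance of Lemma~\ref{lemma-expanse}), so the distance changes by at most a factor $1+\alpha_t\beta$. With probability $1/n$ the drawn triple is $d'$, which is present only in $\mathcal{D}$, so only the $\bmw$-run moves, by an amount bounded by $\alpha_t L$ via the $L$-Lipschitz property ($\sigma$-boundedness with $\sigma=\alpha_t L$). Crucially, because only one side updates on the differing step, this additive term is $\alpha_t L$ rather than the $2\alpha_t L$ of the equal-size setting, which is exactly what replaces $2cL^2$ by $cL^2$ in the final bound. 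Taking expectations gives $\Delta_{t+1} \le (1-1/n)(1+\alpha_t\beta)\Delta_t + \tfrac{1}{n}(\Delta_t + \alpha_t L)$, the $\bfgamma=\bfnull$ specialization of Lemma~\ref{lemma-norm-bound}.

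Next I would exploit the boundedness $f(\cdot;d)\in[0,1]$ to split off the early steps: for any $t_0$, $\EX|f(\bmw_T;d)-f(\bmw_T';d)| \le \Pr[\delta_{t_0}\ne 0] + L\,\EX[\delta_T\mid \delta_{t_0}=0] \le \tfrac{t_0}{n} + L\Delta_T$, where the probability that $d'$ is drawn within the first $t_0$ steps is controlled by a union bound. It then remains to unroll the recursion from $\Delta_{t_0}=0$ using $1+x\le e^x$, $\alpha_t\le c/t$, and $\sum_{k=t+1}^{T}1/k\le\log(T/t)$. The key bookkeeping step---and the main obstacle---is to retain the factor $(1-1/n)$ throughout rather than bounding it by $1$ prematurely: carrying it into both the exponent $(1-1/n)\beta c$ and the integration constant $1/((1-1/n)\beta c)$ lets the prefactor $1/n$ combine as $n(1-1/n)=n-1$, producing $L\Delta_T \le \tfrac{L^2}{(n-1)\beta}(T/t_0)^{\beta c}$ after bounding the exponent by $\beta c$. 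Finally I would minimize $\tfrac{1}{n-1}\big(t_0 + \tfrac{L^2}{\beta}T^{\beta c}t_0^{-\beta c}\big)$ over $t_0$; the optimizer $t_0=(cL^2)^{1/(\beta c+1)}T^{\beta c/(\beta c+1)}$ yields exactly $\Lambda_{\mbox{stab-nc}}$. Beyond this integration bookkeeping, the only remaining subtlety I anticipate is justifying the boundedness decomposition for a \emph{general} $f\in[0,1]$ rather than a loss, but since that step uses only $\sup|f|\le 1$ and $L$-Lipschitzness, it goes through unchanged.
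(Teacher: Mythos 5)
Your proposal is correct and takes essentially the same route as the paper's proof: both adapt Hardt et al.'s non-convex stability argument via the decomposition $\EX|f(\bmw_T;d)-f(\bmw_T';d)| \le \tfrac{t_0}{n} + L\,\EX[\delta_T \mid \delta_{t_0}=0]$, the coupled one-step recursion $\Delta_{t+1} \le (1-\tfrac{1}{n})(1+\alpha_t\beta)\Delta_t + \tfrac{1}{n}(\Delta_t+\alpha_t L)$ with the key observation that on the differing step only the $\bmw$-run moves (giving $\alpha_t L$ instead of $2\alpha_t L$ and hence $cL^2$ instead of $2cL^2$), followed by unrolling and optimizing over $t_0$. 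The only difference is one of detail: you spell out the unwinding bookkeeping (exponential bound, log sums, retaining the $(1-1/n)$ factor to produce $n-1$) that the paper delegates to the proof of Theorem 3.12 in Hardt et al.
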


\begin{proof}
The proof of the theorem follows the notation and strategy of the proof of the uniform stability bound of SGD in \cite{stability-hardt}. We provide some of the details here as we will use a similar proof idea for the approximation bound for gradient rollback. 
	By Lemma~3.11 in \cite{stability-hardt}, we have for every $t_0 \in \{1, ..., n\}$
	\[ \EX|f(\bmw_T; d) - f(\bmw'_T; d)| \leq \frac{t_0}{n} + L\EX[\delta_T \mid \delta_{t_0} = 0] \]
	where $\delta_t = \norm{\bmw_t - \bmw_t'}$. Let $\Delta_{t} = \EX[\delta_t \mid \delta_{t_0}=0]$. We will state a bound $\Delta_t$ as a function of $t_0$ and then follow exactly the proof strategy of \cite{stability-hardt}.
	
	At step $t$ with probability $1-1/n$ the triple selected by SGD exists in both sets $\mathcal{D}$ and $\mathcal{D}-\{d'\}$. In this case we can use the  $(1 + \alpha \beta)$-expansivity of the update rule~\cite{stability-hardt} which follows from our smoothness assumption. With probability $1/n$ the selected triple is $d'$. Since $d'$ is the triple missing in $\mathcal{D}-\{d'\}$ the weights $\bmw'$ are not updated at all. This is justified because we run SGD on both sets a set number of epochs, each with $|\mathcal{D}|$ steps and, therefore, do not change $\bmw'$ with probability $1/n$ because the training set corresponding to $\bmw'$ has one sample less. Hence, in this case we use the $\alpha L$-bounded property of the update as a consequence of the $\alpha L$-boundedness of the gradient descent update rule.
	We can now apply Lemma 2.5 from \cite{stability-hardt} and linearity of expectation to conclude that for every $t \geq t_0,$
	\[\Delta_{t+1} \leq \left(1 - \frac{1}{n}\right) (1 + \alpha_t\beta)\Delta_t + \frac{1}{n}\Delta_t + \frac{\alpha_tL}{n} .\]
	The theorem now follows from unwinding the recurrence relation and optimizing for $t_0$ in the exact same way as done in prior work~\cite{stability-hardt}.
\end{proof}

\begin{lemma}
	\label{lemma-expanse}
	Let $f: \Omega \rightarrow \mathbb{R}$ be a function and let $G(\bmw) = w - \alpha \nabla f(\bmw)$ be the gradient update rule with step size $\alpha$. Moreover, assume that $f$ is $\beta$-smooth. Then, for every $\bmw, \bmw', \bfgamma \in \Omega$ we have
	\[ \norm{G(\bmw) - \bfgamma - G(\bmw')} \leq \norm{\bmw - \bfgamma - \bmw'} + \alpha \beta \norm{\bmw - \bmw'}. \]
\end{lemma}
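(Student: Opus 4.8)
The plan is to prove this by direct substitution of the update rule followed by the triangle inequality, reducing the claim to a single application of $\beta$-smoothness. The key structural observation is that the gradient update $G(\bmw) = \bmw - \alpha \nabla f(\bmw)$ splits additively into a ``position'' part $\bmw$ and a ``step'' part $-\alpha\nabla f(\bmw)$, and that the perturbation $\bfgamma$ interacts only with the position part. This is precisely what lets us carry $\bfgamma$ through the argument without it ever touching the gradient terms, and it is the reason the standard $(1 + \alpha\beta)$-expansiveness lemma of \citet{stability-hardt} generalizes in this way.

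First I would write out both updates explicitly, $G(\bmw) = \bmw - \alpha \nabla f(\bmw)$ and $G(\bmw') = \bmw' - \alpha \nabla f(\bmw')$, and regroup the argument of the left-hand norm so that the quantity $\bmw - \bfgamma - \bmw'$ appears as one block and the gradient difference as another:
\[ G(\bmw) - \bfgamma - G(\bmw') = \left(\bmw - \bfgamma - \bmw'\right) - \alpha\left(\nabla f(\bmw) - \nabla f(\bmw')\right). \]
Next I would apply the triangle inequality to separate these two blocks, obtaining
\[ \norm{G(\bmw) - \bfgamma - G(\bmw')} \leq \norm{\bmw - \bfgamma - \bmw'} + \alpha \norm{\nabla f(\bmw) - \nabla f(\bmw')}. \]
Finally, I would invoke the $\beta$-smoothness of $f$, which gives $\norm{\nabla f(\bmw) - \nabla f(\bmw')} \leq \beta \norm{\bmw - \bmw'}$, and substitute this into the second term to arrive at the stated bound $\norm{\bmw - \bfgamma - \bmw'} + \alpha\beta\norm{\bmw - \bmw'}$.

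There is essentially no hard step here: once the additive decomposition is written down, the argument is just triangle inequality plus the definition of $\beta$-smoothness. The only point requiring any care is the regrouping in the first line, where one must confirm that $\bfgamma$ really does collect cleanly with $\bmw - \bmw'$ rather than entangling with the gradient terms; this is immediate from the linearity of the update rule in its position argument. I expect this lemma to serve as the one-step recurrence ingredient feeding into Lemma~\ref{lemma-norm-bound}, where $\bfgamma$ will be instantiated as the tracked influence $\bfgamma_{[d', \bmw_t(d)]}$, so the value of stating it in this general form with an arbitrary $\bfgamma \in \Omega$ is that it can be reused across both the SGD-coupling steps and the influence-rollback steps.
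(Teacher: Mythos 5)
Your proof is correct and follows exactly the same route as the paper's: substitute the update rule, regroup so that $\bmw - \bfgamma - \bmw'$ and the gradient difference appear as separate blocks, apply the triangle inequality, and finish with $\beta$-smoothness. Nothing is missing; the argument matches the paper's proof step for step.
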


\begin{proof}
	\begin{linenomath*}
		\begin{align*}
		&\norm{G(\bmw) - \bfgamma - G(\bmw')} \\
		= &\norm{\bmw - \alpha \nabla f(\bmw) - \bfgamma - (\bmw' - \alpha \nabla f(\bmw')} \\
		\leq&  \norm{\bmw - \bfgamma - \bmw'} + \alpha \norm{\nabla f(\bmw) - \nabla f(\bmw')}  \\
		\leq&  \norm{\bmw - \bfgamma - \bmw'} + \alpha \beta \norm{\bmw - \bmw'}.
		\end{align*}
	\end{linenomath*}
	The first equality follows from the definition of the gradient update rule. The first inequality follows from the triangle inequality. The second inequality follows from the definition of $\beta$-smoothness.
\end{proof}

\begin{lemma}
\label{lemma-norm-bound}
Let $f(\cdot; d)$ be $L$-Lipschitz and $\beta$-smooth function. Suppose we run SGD for $T$ steps on two sets of triples $\mathcal{D}$ and $\mathcal{D}-\{d'\}$ for any $d' \in\mathcal{D}$ and with learning rate $\alpha_t$ at time step $t$. Moreover, let $\Delta_t = \EX\left[\norm{\bmw_t - \bmw_t'} \mid \norm{\bmw_{t_0} - \bmw_{t_0}'}=0\right]$ and $\hat{\Delta}_t = \EX\left[\norm{\bmw_t - \bfgamma_{[d', \bmw_t(d)]} - \bmw_t'} \mid \norm{\bmw_{t_0} - \bmw_{t_0}'}=0\right]$ for some $t_0 \in \{1, ..., n\}$ . Then, for all $t \geq t_0$, 
\[ \hat{\Delta}_{t+1} < \left(1 - \frac{1}{n}\right)(1 + \alpha_t \beta)\Delta_t + \frac{1}{n}\left(\Delta_t + \alpha_t L\right). \]
\end{lemma}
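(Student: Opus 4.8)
Write $\bfgamma_t := \bfgamma_{[d', \bmw_t(d)]}$ for brevity, and note that $\bfgamma_t$ is a vector over the full parameter space that is supported only on the coordinates of $\bmw$ belonging to $d$. The plan is to follow the coupling behind Theorem~\ref{theorem-stab-bound}: run SGD on $\mathcal{D}$ and $\mathcal{D}-\{d'\}$ under the same random permutation, so the two runs coincide until $d'$ is first drawn. First I would settle the base case. Conditioning on $\norm{\bmw_{t_0} - \bmw_{t_0}'} = 0$ forces $d'$ not to have been selected through step $t_0$; since $\bfgamma_{[d', \cdot]}$ only accrues mass on steps where $d'$ is the sampled triple, this gives $\bfgamma_{t_0} = \bfnull$ and hence $\hat{\Delta}_{t_0} = \Delta_{t_0} = 0$.

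For the inductive step at $t \geq t_0$ I would split on the step-$t$ draw. With probability $1 - 1/n$ the drawn triple is not $d'$, so $\bmw_t$ and $\bmw_t'$ receive the identical gradient update $G$ while $\bfgamma_t$ is unchanged; applying Lemma~\ref{lemma-expanse} with the fixed vector $\bfgamma_t$ gives $\norm{G(\bmw_t) - \bfgamma_t - G(\bmw_t')} \leq \norm{\bmw_t - \bfgamma_t - \bmw_t'} + \alpha_t \beta \norm{\bmw_t - \bmw_t'}$. With probability $1/n$ the drawn triple is $d'$; then $\bmw_{t+1} = \bmw_t - \alpha_t \nabla f(\bmw_t; d')$, the retrained parameters stay put ($\bmw_{t+1}' = \bmw_t'$), and $\bfgamma$ is incremented by exactly the restriction of that same step to $d$'s coordinates, $\bfgamma_{t+1} = \bfgamma_t - \alpha_t [\nabla f(\bmw_t; d')]_{\bmw(d)}$.

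The delicate step, which I expect to be the crux, is the algebra of this second case. Substituting the three updates, the terms supported on the coordinates shared by $d$ and $d'$ cancel exactly, leaving $\bmw_{t+1} - \bfgamma_{t+1} - \bmw_{t+1}' = (\bmw_t - \bfgamma_t - \bmw_t') - \alpha_t [\nabla f(\bmw_t; d')]_{\neg \bmw(d)}$, where the residual is the part of the $d'$-gradient supported on coordinates \emph{not} belonging to $d$. Taking norms and using the triangle inequality bounds the increment by $\alpha_t \norm{[\nabla f(\bmw_t; d')]_{\neg \bmw(d)}} \leq \alpha_t L$; because $d$ and $d'$ share at least one element, rolling back has discarded a nonzero component of the gradient, so this residual is \emph{strictly} below $L$. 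The care here is to treat $\bfgamma_t$ consistently as a zero-padded full-dimensional vector so that both Lemma~\ref{lemma-expanse} and the coordinate bookkeeping line up.

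Finally I would combine the two cases by linearity of expectation to obtain $\hat{\Delta}_{t+1} \leq (1 - 1/n)(\hat{\Delta}_t + \alpha_t \beta \Delta_t) + (1/n)(\hat{\Delta}_t + \alpha_t L)$, where the trailing $\alpha_t L$ may in fact be replaced by the strictly smaller expected residual norm. Comparing this recurrence term by term with the one governing $\Delta_t$ in the proof of Theorem~\ref{theorem-stab-bound} — both start from $0$ at $t_0$ and share the same affine, monotone form — yields the invariant $\hat{\Delta}_t \leq \Delta_t$ by induction. Substituting $\hat{\Delta}_t \leq \Delta_t$ together with the strict residual bound collapses the right-hand side to $(1 - 1/n)(1 + \alpha_t \beta)\Delta_t + (1/n)(\Delta_t + \alpha_t L)$ and makes the inequality strict, which is the claim.
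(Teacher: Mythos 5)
Your overall skeleton (couple the two runs under one permutation, split on whether $d'$ is drawn at step $t$, use Lemma~\ref{lemma-expanse} in the common case) matches the paper's proof, but the two ingredients your conclusion rests on both have gaps, and they sit exactly where you depart from the paper. First, the strictness. In the branch where $d'$ is drawn, you keep a residual $\alpha_t \norm{[\nabla f(\bmw_t; d')]_{\neg \bmw(d)}}$ and assert it is \emph{strictly} below $\alpha_t L$ ``because rolling back has discarded a nonzero component of the gradient.'' Nothing guarantees that the component of $\nabla f(\bmw_t; d')$ on the coordinates shared with $d$ is nonzero at that step; if it vanishes there, the residual equals $\norm{\nabla f(\bmw_t; d')}$, and $L$-Lipschitzness only gives $\norm{\nabla f} \leq L$, never $< L$. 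So your only source of strict inequality can fail. The paper's proof has no residual at all in this branch: $\bfgamma$ is incremented by exactly the update that $\bmw$ receives on the coordinates relevant to $d$ (and $f(\cdot; d)$ depends only on those coordinates, which is all the downstream Lipschitz argument in Theorem~\ref{theorem-bound-gr} needs), so the terms cancel identically and $\hat{\Delta}_{t+1} = \hat{\Delta}_t$ — no $\alpha_t L$ term whatsoever. The strictness of the lemma then comes from the fact that the recurrence for $\Delta_t$ accrues $\alpha_t L / n > 0$ at every step while the recurrence for $\hat{\Delta}_t$ does not. By re-introducing an $\alpha_t L$ term you erase precisely the mechanism that makes gradient rollback beat the generic stability bound, and you are then forced to recover strictness from an unprovable claim.

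Second, your invariant $\hat{\Delta}_t \leq \Delta_t$ ``by comparing the recurrences term by term'' is not a valid induction: both recurrences are one-sided upper bounds. From $\hat{\Delta}_t \leq \Delta_t$ together with upper bounds on $\hat{\Delta}_{t+1}$ and on $\Delta_{t+1}$ you cannot conclude $\hat{\Delta}_{t+1} \leq \Delta_{t+1}$, because $\Delta_{t+1}$ may fall far below its recurrence bound (subsequent updates can shrink $\norm{\bmw_{t+1} - \bmw_{t+1}'}$) while $\hat{\Delta}_{t+1}$, which still carries $\norm{\bfgamma}$, sits at its bound. Moreover, at the base case $\hat{\Delta}_{t_0} = \Delta_{t_0} = 0$ your two recurrences coincide once the residual is honestly bounded by $\alpha_t L$, so no gap — let alone a strict one — ever opens. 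The paper avoids this trap by introducing deterministic majorant sequences $A_t \geq \Delta_t$ and $B_t \geq \hat{\Delta}_t$ defined by \emph{equality} recurrences with $A_{t_0} = B_{t_0} = 0$, and computing their difference exactly: $C_{t+1} = A_{t+1} - B_{t+1} = C_t + \alpha_t L / n$, hence $B_{t+1} < A_{t+1}$ for all $t \geq t_0$, which yields $\hat{\Delta}_{t+1} \leq B_{t+1} < A_{t+1}$. That bookkeeping — comparing bounding sequences rather than the random quantities themselves — is the step your argument needs and does not have; note also that the non-strict inequality you could salvage would not suffice for Theorem~\ref{theorem-bound-gr}, whose proof extracts a positive $\epsilon$ from exactly this strict gap.
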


\begin{proof}
We know from the proof of Theorem~\ref{theorem-stab-bound} that the following recurrence holds
\begin{equation}
\label{eq-lemma-norm-bound}
\Delta_{t+1} \leq \left(1 - \frac{1}{n}\right)(1 + \alpha_t \beta)\Delta_t + \frac{1}{n}\left(\Delta_t + \alpha_t L\right).
\end{equation}
We first show that when using gradient rollback (GR), we have that 
\begin{equation}
\label{new-recurrence}
\hat{\Delta}_{t+1} \leq \left(1 - \frac{1}{n}\right)(\hat{\Delta}_{t} + \alpha_t \beta\Delta_{t}) + \frac{1}{n}\hat{\Delta}_t.
\end{equation}
At step $t$ of SGD, with probability $1-\frac{1}{n}$, the triple selected is in both $\mathcal{D}$ and $\mathcal{D}-\{d'\}$. In this case, by Lemma~\ref{lemma-expanse}, we have that $\hat{\Delta}_{t+1} \leq \hat{\Delta}_{t} + \alpha_t \beta\Delta_{t}$. With probability $\frac{1}{n}$ the selected example is $d'$,  in which case we have, due to the behavior of GR, that
{\small
\begin{align*}
&\hat{\Delta}_{t+1} \\
&\;\;= \EX\left[\norm{\bmw_{t+1} - \bfgamma_{[d', \bmw_{t+1}(d)]} - \bmw_{t+1}'} \mid \norm{\bmw_{t_0} - \bmw_{t_0}'}=0\right] \\
&\;\; =  \EX\left[\norm{\bmw_{t+1} - \bfgamma_{[d', \bmw_{t+1}(d)]} - \bmw_{t}'} \mid \norm{\bmw_{t_0} - \bmw_{t_0}'}=0\right]\\
&\;\; =  \EX\left[\big|\!\big|\bmw_{t} - \alpha_t \nabla f(\bmw_t; d') - \left( \bfgamma_{[d', \bmw_{t}(d)]} \right.\right.\\
&\;\; \;\;\;\;\;\;\; \left. - \alpha_t \nabla f(\bmw_t; d')\big)- \bmw_{t}'\big|\!\big| \mid \norm{\bmw_{t_0} - \bmw_{t_0}'}=0\right] \\
&\;\; =  \EX\left[\norm{\bmw_{t} -  \bfgamma_{[d', \bmw_{t}(d)]} - \bmw_{t}'} \mid \norm{\bmw_{t_0} - \bmw_{t_0}'}=0\right]  \\
&\;\; = \hat{\Delta}_{t},
\end{align*}}

This proves Equation~\ref{new-recurrence}.

Let us now define the recurrences $A_{t+1}$ and $B_{t+1}$ for the upper bounds of $\Delta_t$ and $\hat{\Delta}_t$, respectively: 

\begin{equation}
A_{t+1} = \left(1 - \frac{1}{n}\right)(1 + \alpha_t \beta)A_t + \frac{1}{n}\left(A_t + \alpha_t L\right),
\end{equation}

\begin{equation}
B_{t+1} = \left(1 - \frac{1}{n}\right)(B_t + \alpha_t \beta A_t) + \frac{1}{n}B_t.
\end{equation}

Now, let 
\begin{align*}
C_{t+1} & = A_{t+1} - B_{t+1} \\
 & = \left(1 - \frac{1}{n}\right)(1 + \alpha_t \beta)A_t + \frac{1}{n}\left(A_t + \alpha_t L\right) \\
 &\;\;\;\;- \left(1 - \frac{1}{n}\right)(B_t + \alpha_t \beta A_t) - \frac{1}{n}B_t \\
 & = \left(1 - \frac{1}{n}\right) A_t + \frac{1}{n}\left( A_t + \alpha_t L\right) - B_t \\
& = \left( A_t - B_t \right) + \frac{1}{n} \alpha_t L\\
& = C_t + \frac{1}{n} \alpha_t L.
\end{align*}

Let us now derive the initial conditions of the recurrence relations. By the assumption $\norm{\bmw_{t_0} - \bmw_{t_0}'}=0$, we have $\Delta_{t_0} = 0$, $\hat{\Delta}_{t_0} = 0$, $A_{t_0} = 0$, $B_{t_0} = 0$, and $C_{t_0} = 0$. Now, since $\alpha_t L > 0$ we have for all $t \geq t_0$ that $C_{t+1} = A_{t+1} - B_{t+1} > 0$ and, therefore, $B_{t+1} <  A_{t+1}$. 
It follows that for all $t \geq t_0$ we have 
\begin{align*}
&\hat{\Delta}_{t+1} \leq B_{t+1} < A_{t+1} \\
&= \left(1 - \frac{1}{n}\right)(1 + \alpha_t \beta)\Delta_t + \frac{1}{n}\left(\Delta_t + \alpha_t L\right).\end{align*} 
This concludes the proof. 
\end{proof}

The following theorem establishes that the approximation error of gradient rollback is smaller than the stability bound of SGD. It uses Lemma~\ref{lemma-norm-bound} and the proof of Theorem~\ref{theorem-stab-bound}.

\begin{theorem}
\label{theorem-bound-gr}
	Let $f(\cdot; d)\in[0,1]$ be an $L$-Lipschitz and $\beta$-smooth loss function. Suppose we run SGD for $T$ steps with monotonically non-increasing step sizes $\alpha_t \leq c/t$ on two sets of triples $\mathcal{D}$ and $\mathcal{D}-\{d'\}$. Let  $\bmw_T$ and $\bmw_T'$, respectively, be the resulting parameters.
	Then, for any triple $d$ that has at least one element in common with $d'$ we have, 
	\[ \EX|f(\bmw_T - \bfgamma_{[d', \bmw_T(d)]}; d) - f(\bmw'_T; d)| < \Lambda_{\mbox{stab-nc}}.\]
\end{theorem}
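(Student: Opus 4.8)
The plan is to run exactly the argument behind the stability bound of Theorem~\ref{theorem-stab-bound}, but with the rolled-back iterate $\bmw_t - \bfgamma_{[d', \bmw_t(d)]}$ in place of $\bmw_t$, and then cash in the strict improvement that Lemma~\ref{lemma-norm-bound} provides. First I would invoke $L$-Lipschitzness together with the decomposition underlying Theorem~\ref{theorem-stab-bound} (Lemma~3.11 of \citet{stability-hardt}, which only uses $f(\cdot;d)\in[0,1]$ and $L$-Lipschitzness, not that $f$ is a loss). Splitting on whether $d'$ is drawn within the first $t_0$ steps, and bounding the $f$-difference by $1$ on the low-probability event (both values lie in $[0,1]$, even though $\bmw_T - \bfgamma_{[d',\bmw_T(d)]}$ may leave $\Omega$) and by $L\hat\delta_T$ otherwise, gives for every $t_0 \in \{1,\dots,n\}$
\[ \EX|f(\bmw_T - \bfgamma_{[d',\bmw_T(d)]};d) - f(\bmw'_T;d)| \leq \frac{t_0}{n} + L\hat{\Delta}_T, \]
where $\hat{\Delta}_T = \EX[\,\hat\delta_T \mid \delta_{t_0}=0\,]$ and $\hat\delta_t = \norm{\bmw_t - \bfgamma_{[d',\bmw_t(d)]} - \bmw'_t}$. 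Conditioning on $\delta_{t_0}=\norm{\bmw_{t_0}-\bmw'_{t_0}}=0$ forces $\bfgamma_{[d',\bmw_{t_0}(d)]}=\bfnull$, since the two runs coincide up to $t_0$ exactly when $d'$ has not yet been processed and hence $\bfgamma$ has accumulated nothing; this is precisely the initial condition $\hat{\Delta}_{t_0}=0$ assumed in Lemma~\ref{lemma-norm-bound}.

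Next I would import the strict recurrence of Lemma~\ref{lemma-norm-bound}. Its proof constructs dominating recurrences $A_t \geq \Delta_t$ and $B_t \geq \hat{\Delta}_t$ with $A_{t_0}=B_{t_0}=0$, where $A_t$ obeys the very recurrence that the proof of Theorem~\ref{theorem-stab-bound} unwinds, and shows that $C_t := A_t - B_t$ satisfies $C_{t+1} = C_t + \tfrac{1}{n}\alpha_t L$. Hence $C_T = \sum_{t=t_0}^{T-1}\tfrac{1}{n}\alpha_t L > 0$ whenever $T>t_0$, so $\hat{\Delta}_T \leq B_T < A_T$. The strict gap is exactly the contribution accumulated at each step where $d'$ is the selected example, and it is here that the hypothesis that $d$ and $d'$ share an element is used: otherwise $\bfgamma_{[d',\bmw_t(d)]}\equiv\bfnull$, rollback is vacuous, and the inequality degenerates to the equality $\hat{\Delta}_T=\Delta_T$.

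Finally I would combine and optimize. For every $t_0$ the two displays give
\[ \EX|f(\bmw_T - \bfgamma_{[d',\bmw_T(d)]};d) - f(\bmw'_T;d)| \leq \frac{t_0}{n} + L\hat{\Delta}_T < \frac{t_0}{n} + L A_T, \]
and the right-hand side is precisely the expression that the proof of Theorem~\ref{theorem-stab-bound} unwinds and minimizes over $t_0$ to obtain $\Lambda_{\mbox{stab-nc}}$. Evaluating the chain at the minimizing $t_0=t_0^\ast$ (for which $\tfrac{t_0^\ast}{n}+L A_T = \Lambda_{\mbox{stab-nc}}$ and $t_0^\ast < T$, so $C_T>0$) yields the claimed strict bound.

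I expect the main obstacle to be the bookkeeping of the coupling in the first step: verifying that conditioning on the trajectories agreeing up to $t_0$ simultaneously yields $\delta_{t_0}=0$ and $\bfgamma_{[d',\bmw_{t_0}(d)]}=\bfnull$, so that Lemma~\ref{lemma-norm-bound} applies with the stated initial conditions, and confirming that the $t_0/n$ decomposition transfers verbatim to the rolled-back iterate despite $\bmw_T - \bfgamma_{[d',\bmw_T(d)]}$ possibly leaving $\Omega$ (handled by assuming $f(\cdot;d)\in[0,1]$ on all inputs). Everything past that point is a strict-inequality refinement riding on the already-established stability recurrence.
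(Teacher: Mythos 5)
Your proposal follows essentially the same route as the paper's own proof: invoke Lemma~3.11 of \citet{stability-hardt} to get the $\tfrac{t_0}{n} + L\hat{\Delta}_T$ decomposition for the rolled-back iterate, apply Lemma~\ref{lemma-norm-bound} to bound $\hat{\Delta}_T$ strictly below the recurrence solution $\Delta_T \leq \tfrac{L}{\beta(n-1)}\left(\tfrac{T}{t_0}\right)^{\beta c}$ from Theorem~\ref{theorem-stab-bound}, and then optimize over $t_0$ exactly as in the stability proof to land strictly below $\Lambda_{\mbox{stab-nc}}$. If anything, your bookkeeping is slightly more careful than the paper's (tracking the gap $C_T$ explicitly and evaluating it at the minimizing $t_0^\ast$, rather than writing ``for some $\epsilon>0$''), but the decomposition, key lemma, and optimization step are identical.
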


\begin{proof}
The proof strategy follows that of Theorem~\ref{theorem-stab-bound} by analyzing how the parameter vectors from two different runs of SGD, on two different training sets, diverge. 

By Lemma~3.11 in \cite{stability-hardt}, we have for every $t_0 \in \{1, ..., n\}$ 
\begin{align}
\label{result-from-hardt}
&\EX|f(\bmw_T; d) - f(\bmw'_T; d)| \leq \frac{t_0}{n} + \\ \notag
&L\EX\left[\norm{\bmw_t - \bmw_t'} \mid \norm{\bmw_{t_0} - \bmw_{t_0}'}=0\right].
\end{align}
Let 
\[\Delta_{t} = \EX\left[\norm{\bmw_t - \bmw_t'} \mid \norm{\bmw_{t_0} - \bmw_{t_0}'}=0\right]\]
and
\[\hat{\Delta}_t = \EX\left[\norm{\bmw_t - \bfgamma_{[d', \bmw_t(d)]} - \bmw_t'} \mid \norm{\bmw_{t_0} - \bmw_{t_0}'}=0\right].\]
We will state bounds $\Delta_t$ and $\hat{\Delta}_r$ as a function of $t_0$ and then follow the proof strategy of \cite{stability-hardt}.

From Lemma~\ref{lemma-norm-bound} we know that 
\[ \hat{\Delta}_{t+1} < \left(1 - \frac{1}{n}\right)(1 + \alpha_t \beta)\Delta_t + \frac{1}{n}\left(\Delta_t + \alpha_t L\right). \]
For the recurrence relation 
\[ \Delta_{t+1} \leq \left(1 - \frac{1}{n}\right)(1 + \alpha_t \beta)\Delta_t + \frac{1}{n}\left(\Delta_t + \alpha_t L\right) \]
we know from the proof of Theorem 3.12 in \cite{stability-hardt} that 
\[ \Delta_T \leq \frac{L}{\beta(n-1)}\left(\frac{T}{t_0}\right)^{\beta c}.\]

Hence, we have by Lemma~\ref{lemma-norm-bound} that, for some $\epsilon > 0$,
\[\hat{\Delta}_{T} \leq \frac{L}{\beta(n-1)}\left(\frac{T}{t_0}\right)^{\beta c} - \epsilon.\]

Plugging this into Equation~\ref{result-from-hardt}, we get
\begin{align*} &\EX|f(\bmw_T - \bfgamma_{[d', \bmw_T(d)]}; d) - f(\bmw'_T; d)| \\
&\leq \frac{t_0}{n} + \frac{L^2}{\beta(n-1)}\left(\frac{T}{t_0}\right)^{\beta c} - L \epsilon. \end{align*}

	The theorem now follows from optimizing for $t_0$ in the exact same way as done in prior work~\cite{stability-hardt}.
\end{proof}

\section{Experimental Details \& Qualitative Examples}
\label{app:statistics}

\paragraph{Experimental Details.} Dataset statistics and the number of negative samples and dimensions can be found in Table \ref{tab:datasets}. The implementation uses Tensorflow 2. 
Furthermore, the optimizer used for all models is Adam \cite{adam} with an initial learning rate (LR) of 0.001 and a decaying LR wrt the number of total steps. To obtain exact values for GR, the batch size is set to 1. Because the evaluation requires the retraining of models, we set the epoch size for \textsc{FB15k-237} and \textsc{Movielens} to 1. For \textsc{Nations}, we set it to 10 and use the model from the epoch with the best MRR on the validation set, which was epoch 10. For \textsc{Nations} and \textsc{FB15k-237}, we use the official splits. \textsc{Movielens}\footnote{\url{https://grouplens.org/datasets/MovieLens/100k/}; 27th May 2020} is the 100$k$ split, where the file ua.base is the training set and the first 5$k$ of ua.test are the validation set and the remaining are the test set. We train \textsc{Nations} and \textsc{FB15k-237} with \textit{softmax\_cross\_entropy\_with\_logits}. For \textsc{Movielens} we employ \textit{sigmoid\_cross\_entropy\_with\_logits} in order to test if GR can also be used under a different loss and because this loss is standardly used in recommender systems. MRR, Hits@1 and Hits@10 of the models can be found in Table \ref{tab:eval}. 
Experiments were run on CPUs for \textsc{Nations} and on GPUs (GeForce GTX 1080 Ti) for the other two datasets on a Debian GNU/Linux 9.11 machine with 126GB of RAM. Computing times for the different steps and datasets can be found in Table \ref{tab:time}. Due to the expensive evaluation step, each experimental setup was run once. All seeds were fixed using number 42.

\begin{table*}[h]
	\begin{center}
		\begin{tabular}{lllllllll}
			\toprule
			&Train&Dev&Test&\#E&\#R&\#Neg&\#Dim\\
			\midrule
			\textsc{Nations}&1592&199&201&14&55&13&10\\
			\textsc{FB15k-237}&272$k$&17.5$k$&20.5$k$&14$k$&237&500&100\\
			\textsc{Movielens}&90$k$&5$k$&4.4$k$&949&1.7$k$&500&200\\
			\bottomrule
		\end{tabular}
		\caption{Dataset statistics, including number of entities (\#E) and number of relations (\#R), as well as hyperparameter settings, number of negative samples (\#Neg) and number of dimensions (\#Dim).}
		\label{tab:datasets}
	\end{center}
\end{table*}

\begin{table}[h]
	\begin{center}
		\begin{tabular}{llll}
			\toprule
			&MRR&Hits@1&Hits@10\\
			\midrule
			\textsc{Nations}&58.88&38.31&97.51\\
			\textsc{FB15k-237}&25.84&19.16&40.18\\
			\textsc{Movielens}&61.81&37.60&n/a\\
			\hline
			\hline
			\textsc{Nations}&60.41&39.80&97.01\\
			\textsc{FB15k-237}&25.11&18.15&40.05\\
			\textsc{Movielens}&61.76&37.51&n/a\\
			\bottomrule
		\end{tabular}
		\caption{MRR, Hits@1 and Hits@10 for the main models of the three datasets, \textsc{DistMult} is at the top and \textsc{ComplEx} at the bottom. (\textsc{Movielens} has 5 ratings that can be predicted, hence Hits@10 is trivially 100.)}
		\label{tab:eval}
	\end{center}
\end{table}

\begin{table}[h]
	\begin{center}
		\begin{tabular}{lcc}
			\toprule
			&Step 1&Step 2\\
			\midrule
			\textsc{Nations}&1&0.07$\pm$0.02\\
			\textsc{FB15k-237}&17&6$\pm$7\\
			\textsc{Movielens}&7&12$\pm$4\\
			\bottomrule
		\end{tabular}
		\caption{Computing times in minutes for the different steps and datasets. Step 1: Train a main model. Step 2: Generate explanation for one triple (the time for this step depends on the number of adjacent training triples, here averaged over 5 random triples). Step 3: Evaluating GR/NH takes the time of Step 1 times the test set size.}
		\label{tab:time}
	\end{center}
\end{table}

\paragraph{Qualitative Examples.}
Beside the question whether GR is able to identify the most influential instances is actually if those would be considered as an explanation by a human. For that, we performed a qualitative analyzes by inspecting the explanation picked by \textsc{GR-1}. We focused on \textsc{FB15k-237} as the triples are human-readable and because they describe general domains like geography or people. Below we present three examples which reflect several other identified cases.\\
\textit{to-be-explained}: “Lycoming County” - “has currency” - “United States Dollar” \\
\textit{explanation}: “Delaware County” - “has currency” - “United States Dollar”

The model had to predicted the currency of Lycoming County and GR picked as the most likely explanation that Delaware County has the US-Dollar as currency. Indeed, both are located in Pennsylvania; thus, it seems natural that they also have the same currency. \\
\textit{to-be-explained}: “Bandai” – “has\_industry” – “Video game” \\
\textit{explanation}: “Midway Games” – “has\_industry” – “Video game” 

Namco (Bandai belongs to Namco) initially distributed its games in Japan, while relying on third-party companies, such as Atari and Midway Manufacturing to publish them internationally under their own brands.\\
\textit{to-be-explained}:  “Inception” - “released\_in\_region” - “Denmark” \\
\textit{explanation}: “Bird” - “released\_in\_region” - “Denmark” \\

Both movies were distributed by Warners Bros. It is reasonable that a company usually distributed the movies in the same country. 

These and other identified cases show that the instances selected by GR are not only the most influential triples but also reasonable explanations for users.

\begin{figure}[!h]
	\centering
	\begin{subfigure}{.23\textwidth}
		\centering
		\includegraphics[width=1.0\linewidth]{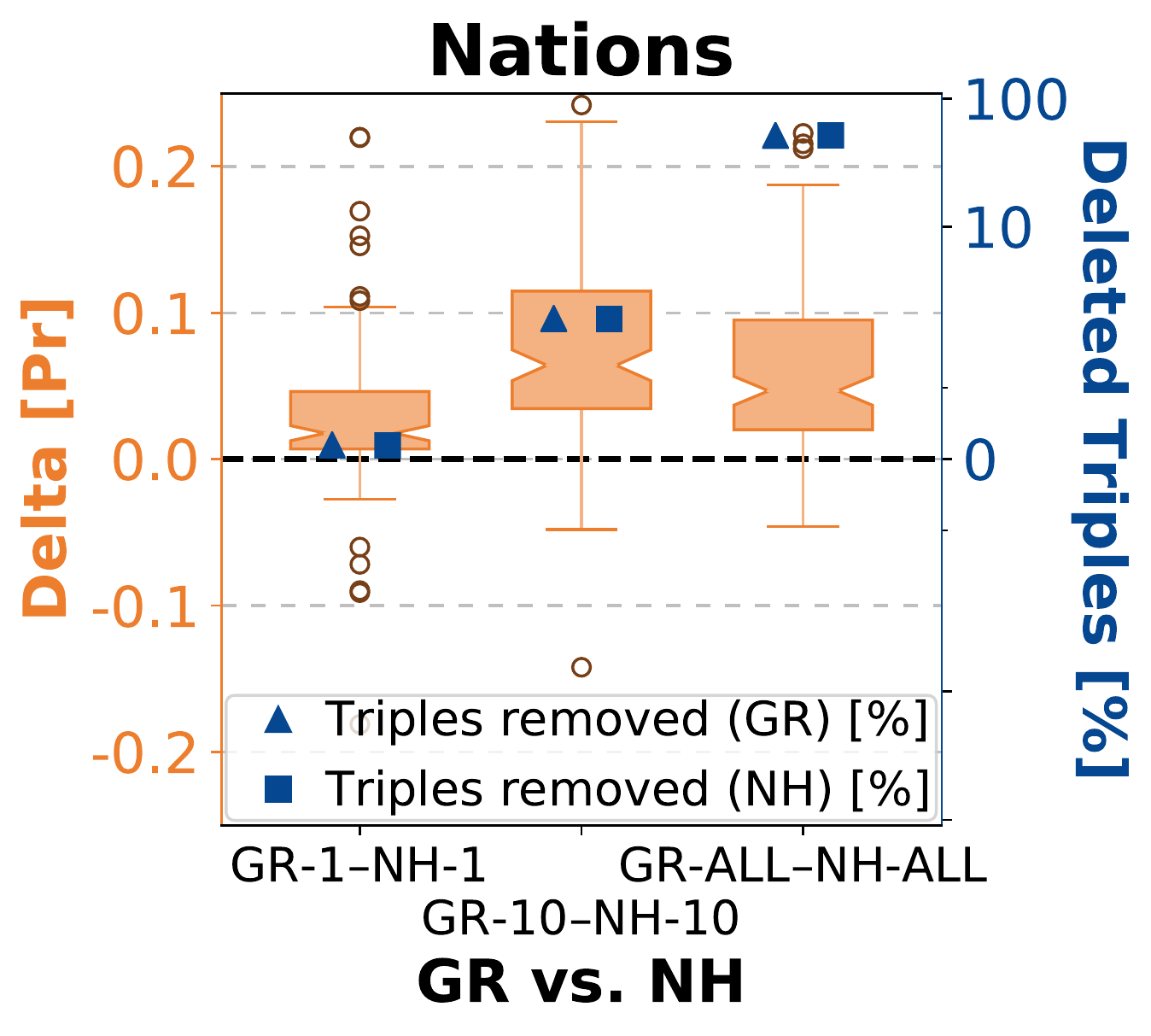}
	\end{subfigure}    
	\hspace{0.1em}
	\begin{subfigure}{.23\textwidth}
		\centering
		\includegraphics[width=1.0\linewidth]{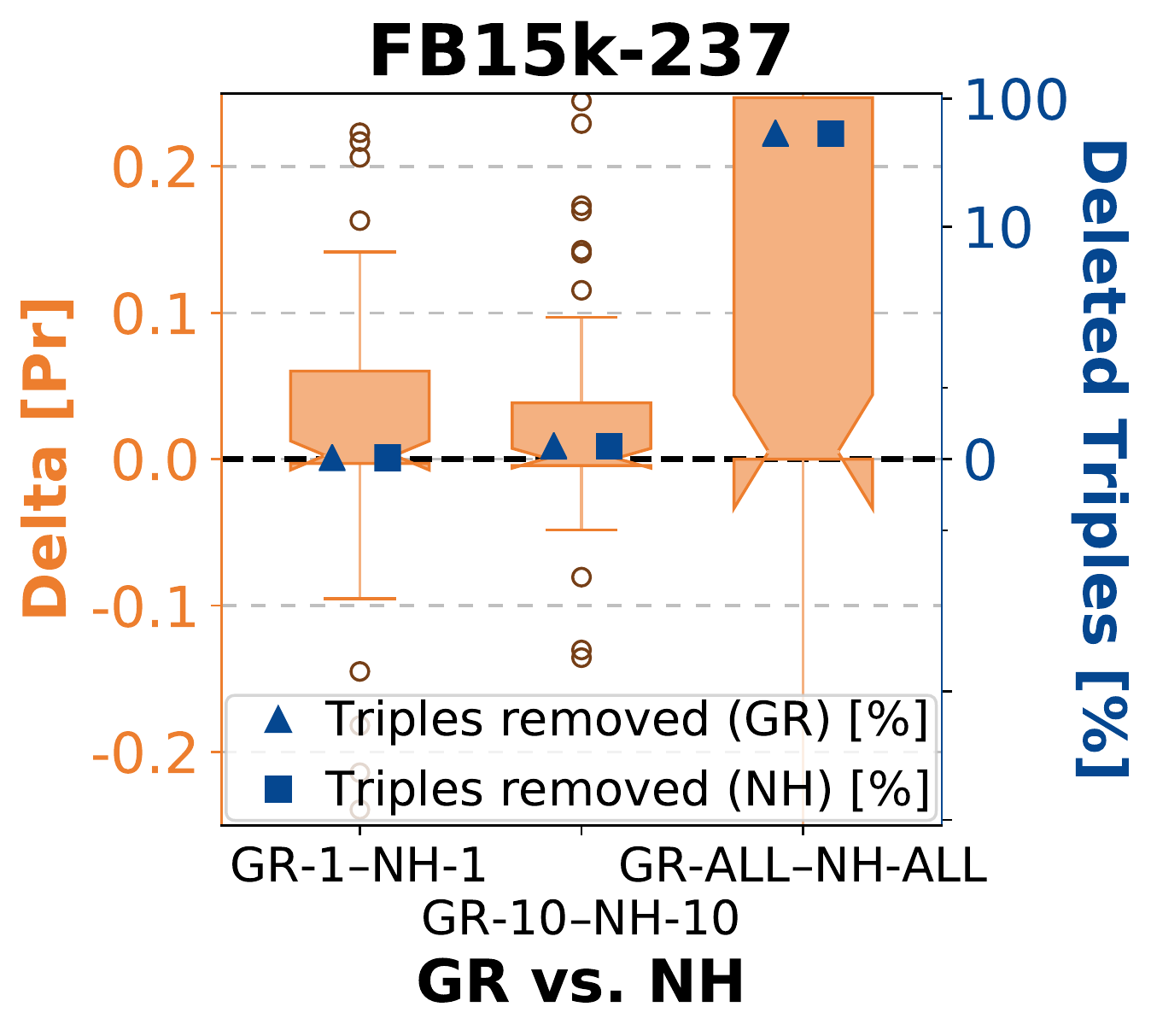}
	\end{subfigure}
	\hspace{0.1em}
	\begin{subfigure}{.23\textwidth}
		\centering
		\includegraphics[width=1.0\linewidth]{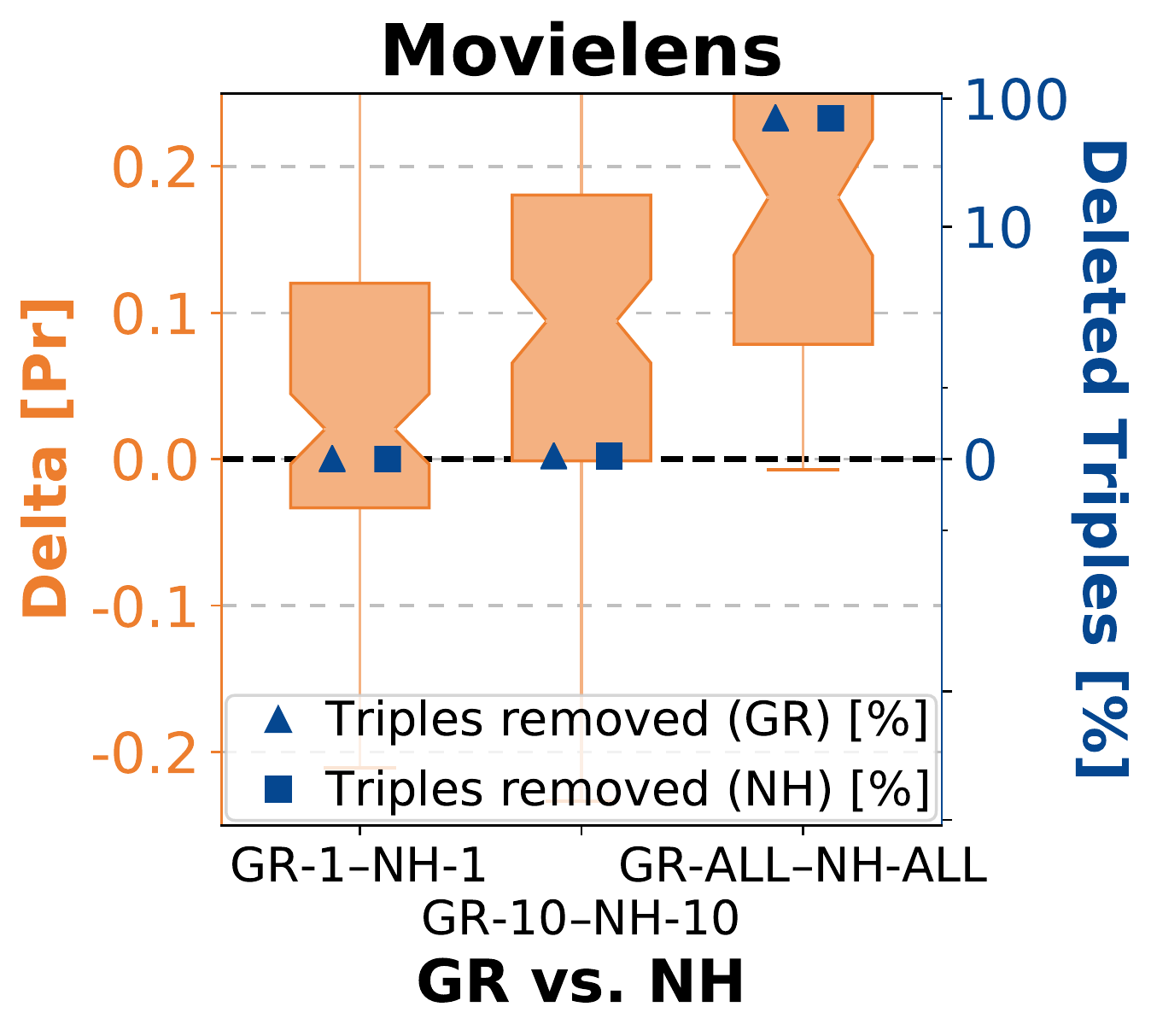}
	\end{subfigure}
	\caption{\label{fig:boxplot}The boxplots illustrate the difference between the change in probability of a test triple caused by removing the set of triples $\mathcal{S}$ selected by, respectively, GR and the baselines NH. The value is larger than zero, when GR selects triples that change the probability more after retraining than those selected by the baselines. The boxplots also depict the standard deviations and the average number of deleted triples (on the right y-axis).}
\end{figure}
\end{appendices}

\end{document}